\newcommand{\bkt}[1]{\llbracket #1\rrbracket}
\newcommand{\proj}{\Pi}
\newcommand{\laser}{\textsf{LASER}}
\let\bar\widebar
\let\tilde\widetilde
\let\hat\widehat
\begin{document}

\begin{frontmatter}
    \title{A new approach to locally adaptive polynomial regression}
    \runtitle{Locally adaptive estimation}  
    
    \begin{aug}
        \author[A]{\fnms{Sabyasachi} \snm{Chatterjee}\ead[label=e1]{sc1706@illinois.edu}},
        \author[B]{\fnms{Subhajit} \snm{Goswami}\ead[label=e2]{goswami@math.tifr.res.in}}, 
        \author[C]{\fnms{Soumendu Sundar} \snm{Mukherjee}\ead[label=e3]{ssmukherjee@isical.ac.in}}
        \footnote{Author names are sorted alphabetically.}
        
        \runauthor{Chatterjee, Goswami and Mukherjee}
        
        \affiliation{University of Illinois Urbana-Champaign, Tata Institute of Fundamental Research and Indian Statistical Institute
        }

        \address[A]{Department of Statistics\\ 
            University of Illinois Urbana–Champaign\\
            \href{mailto:sc1706@illinois.edu}{sc1706@illinois.edu}\\
            \phantom{as}}
        \address[B]{School of Mathematics\\
            Tata Institute of Fundamental Research\\
            \href{mailto:ssmukherjee@isical.ac.in}{goswami@math.tifr.res.in}\\
            \phantom{ac}}
\address[C]{Statistics and Mathematics Unit (SMU)\\
            Indian Statistical Institute, Kolkata\\
            \href{mailto:goswami@math.tifr.res.in}{ssmukherjee@isical.ac.in}\\
            \phantom{ad}}
        
    \end{aug}

    \begin{abstract}
Adaptive bandwidth selection is a fundamental challenge in nonparametric 
regression. %
This paper introduces a new bandwidth selection procedure inspired 
by the optimality criteria for $\ell_0$-penalized regression. Although similar 
in spirit to Lepski's method and its variants in selecting the 
largest interval satisfying an admissibility 
criterion, our approach stems from a distinct philosophy, utilizing criteria 
based on $\ell_2$-norms of interval projections rather than explicit point and 
variance estimates. We obtain non-asymptotic risk bounds for the local polynomial regression methods based on our bandwidth selection procedure 
which adapt (near-)optimally to the local H\"{o}lder exponent of the underlying regression function simultaneously at all points in 
its domain. Furthermore, we show that there is a single ideal choice of a global tuning parameter in each case under which the above-mentioned 
local adaptivity holds. %
The optimal risks of our methods derive from the properties of %
solutions to a %
new ``bandwidth selection equation'' %
which is of independent interest. %
We believe that the principles underlying our approach provide a new 
perspective to the classical yet ever relevant problem of locally adaptive 
nonparametric regression.

    \end{abstract}
    
    \begin{keyword}
    Nonparametric regression, regression trees, local adaptivity, local polynomial regression, variable bandwidth selection
    \end{keyword}
\end{frontmatter}

\section{Introduction}\label{sec:intro}
\subsection{Nonparametric regression: local adaptivity}\label{subsec:nonparam}
Nonparametric regression is a classical and fundamental problem in Statistics; see~\cite{Gyorfi, WassermanNonpar, Tsybakovbook} \sloppy for an introduction to the subject. The basic problem is to estimate the conditional expectation function $f(x) = \E (Y|X = x)$ from data points $\{x_i,y_i\}_{i = 1}^{n}$ %
under weak assumptions on $f$, such as $f$ belongs to some infinite dimensional function class like all Lipschitz/H\"{o}lder smooth functions. In this paper, we develop a new locally adaptive nonparametric regression method. To keep our exposition simple and focused, we only consider the univariate case. Our estimator and the theoretical analysis of its performance can both be extended to a multivariate setting. However, this requires several new ingredients and will be carried out in a forthcoming work.

Let us consider the simplest possible setting where the design points $\{x_i\}_{i = 1}^{n}$ are fixed to be on a grid in $[0,1]$, i.e., $$x_1 < x_2 < \dots < x_n,$$ where $x_i = \frac{i}{n}.$ In this case, denoting $\theta^*_i = f(\frac{i}{n})$ we have the usual signal plus noise model
\begin{equation}\label{eq:thetaep}
    y_i = \theta^*_i + \epsilon_i.
\end{equation}
We also make the standard assumption that $\epsilon_i$'s are independent mean zero sub-Gaussian variables with {\em sub-Gaussian norm} bounded by $\sigma > 
0$, i.e., 
\begin{equation}\label{eq:psi2norm}
\sup_{i \in [n]}\E\left[\exp(\tfrac{\epsilon_i^2}{\sigma^2})\right] \le 1. 
\end{equation}
Under this standard model, the task is to estimate the unknown function/signal $f$/$\theta^*$ upon observing the data vector $y$.

In many real problems, the true regression function $f$ may not be uniformly 
smooth, and its degree of smoothness may vary in different parts of the 
domain. It should be easier to estimate a function where it is smooth and 
harder where it is rough. In this article, we revisit the phenomena of \textit{local 
adaptivity}. Intuitively, we can say that a nonparametric regression method is locally adaptive if it estimates the function at each location in the domain ``as good as possible'' depending on the local degree of smoothness.

Although the collection of methods in the nonparametric regression toolbox is 
very rich (see, e.g.,~\cite{cover1967nearest, fan1996framework,wand1994kernel,de1978practical,green1993nonparametric,wahba1990spline, smola1998learning, donoho1994ideal, tibshirani2020divided, breiman2017classification, bishop1994neural} and the references 
therein), not all of them are provably locally adaptive in the sense alluded to 
in the last paragraph. In fact, it is well known (see, e.g.,~\cite{donoho1998minimax}) that linear smoothers, like local polynomial 
regression, kernel smoothing, smoothing splines, etc., are {\em not} locally 
adaptive.

A large class of nonlinear methods which are regarded as locally adaptive use 
kernel smoothing/polynomial regression with data-dependent variable 
bandwidths. One stream of such methods originates from the seminal work of 
Lepski~\cite{lepskii1991problem, lepskii1992asymptotically}. In a nutshell, at each point $x$ in the 
domain, Lepski's method chooses the largest bandwidth (from a discrete set of 
possible values) such that the kernel estimate at $x$ is within a carefully 
defined error tolerance to estimates at smaller bandwidths. For a state-of-the-art account of the developments in this area, see the ICM survey 
\cite{lepski2022theory} by Lepski and the references therein. There is also a large body of work on variable bandwidth local polynomial 
regression; some notable works include~\cite{FanGijbels92, 
FanGijbels95, fan1996local, RupertWand94, 
RuppertSheathWand95, GN97, lafferty2008rodeo}. %

On the other hand, certain global methods --- notably the ones utilizing 
penalized least squares --- are also known or widely believed to be locally 
adaptive. Locally adaptive regression splines~\cite{koenker1994quantile, mammen1997locally}, trend filtering \cite{steidl2006splines, kim2009ell_1, tibshirani2014adaptive}, wavelet thresholding~\cite{mallat1999wavelet, johnstone2011gaussian, Cai99, CaiZhou2009}, 
jump/$\ell_0$-penalized least squares~\cite{boysen2009consistencies}, dyadic CART~\cite{donoho1997cart, chatterjee2019adaptive} all fall under this approach. 

The current work presents a fusion between the above two approaches in 
that we develop a new criterion for (variable) bandwidth selection inspired 
by $\ell_0$-penalized least squares. In the next subsection, we provide a brief 
sketch illustrating how this criterion is developed.

\subsection{From $\ell_0$-penalized regression to bandwidth 
selection}\label{subsec:informal}

In the context of variable bandwidth estimators, one prominent and classical 
approach for selecting the optimal bandwidth is based on an explicit 
optimization of the bias-variance decomposition of the estimation error (see, 
for instance, \cite{FanGijbels92, FanGijbels95, GN97}). As hinted in the 
previous subsection, we develop a new bandwidth selection procedure 
inspired by the optimality criterion in the $\ell_0$-penalized regression. %
At the heart of our approach lies a new {\em discrepancy 
measure}, the formulation of which we believe to be one of the distinguishing contributions of this article. 

We now present a sketch of how we arrive at our discrepancy measure 
starting from the optimality criterion of the $\ell_0$-penalized least squares 
problem:
\begin{equation}\label{eq:argmin}
    \argmin_{\theta \in \R^n} \big(\|y - \theta\|^2 + \lambda 
    \|D^{(r)}\theta\|_0\big)
\end{equation}
where $D^{(r)}$ is the $r$-th order finite difference operator. For simplicity of 
discussion, let us confine ourselves to the case $r = 1$, where the optimal 
solution $\hat \theta$ is given by the average value of $y$ over an optimal partition $\mathcal P$ of 
$\{1, \ldots, n\}$. If we sub-divide any block $I \in \mathcal P$ into two sub-blocks $I_1$ and $I_2$ 
(say), then by the optimality of $\hat \theta$ we can write,
\begin{equation*}
\sum_{i \in I} (y_i - \overline y_I)^2 \le \sum_{i \in I_1} (y_i - \overline y_{I_1})^2 + \sum_{i \in I_2} (y_i - \overline y_{I_2})^2 + \lambda.
\end{equation*}
Since this inequality holds for all sub-divisions of $I$ into two sub-blocks, we have
\begin{equation*}
    T_y^2(I) \stackrel{{\rm def.}}{=} \max_{I_1, I_2} \big( \sum_{i \in I} (y_i - \overline y_I)^2 - \sum_{i \in I_1} (y_i - \overline y_{I_1})^2 + \sum_{i \in I_2} (y_i - \overline y_{I_2})^2  \big) \le \lambda.
\end{equation*}
Thus $T_y(I)$ emerges as a goodness-of-fit measure (referred to as {\em local discrepancy measure} in the paper) for fitting a constant 
function to $y$ on the interval $I$. We would like to point out that the criterion 
``$T_y^2(I) \le \lambda$'' is closely related to the standard splitting criterion for 
regression in Classification and Regression Trees (CART). See \S\ref{subsec:reg_tree} 
below for more on this connection.

Going back to our regression problem \eqref{eq:thetaep}, we now propose to 
estimate $\theta_{i_0}^\ast$ for any given $i_0 \in \{1, \ldots, n\}$ as
\begin{equation*}
    \hat \theta_{i_0} = \overline y_I, \text{ where } I = {\arg \max}_{T_y^2(J) \le \lambda, \, J \ni i_0} \, |J|.
\end{equation*}
In words, we estimate $\theta_{i_0}^\ast$ by the average of $y$ over the {\em 
largest} interval $I$ containing $i_0$ such that $T_y^2(I) \le \lambda$. To understand why 
we take the largest interval, consider the {\em noiseless} scenario where the 
underlying signal is piecewise constant on some partition $\mathcal P^\ast$ of $\{1, \ldots, 
n\}$. In this case, we have $T_y^2(I) 
= 0$ for every sub-interval $I$ of the block $I_0  \in \mathcal P^\ast$ 
containing $i_0$. Clearly, the optimal bandwidth is given by $I_0$ which, by the 
previous observation, is also the largest interval $I$ containing $i_0$ such that 
$T_y^2(I) \le \lambda = 0$. In the general noisy setting, we would have to set $\lambda > 0$; however, it 
is not a priori clear why the prescription of the largest interval $I$ satisfying 
$T_y^2(I) \le \lambda$ would still yield a ``good'' bandwidth. This is precisely what we 
establish by setting $\lambda$ as  the ``effective noise" level of our problem (see the next subsection).  

More generally, for any $r \ge 1$, the relevant notion of goodness-of-fit turns out to be 
\begin{equation}\label{def:Tr}
			\begin{split}
				(T_y^{(r-1)}(I))^2 &= \max_{I_1,I_2} \|y_{I} -  \Pi_I^{(r - 1)} \theta_{I}\|^2 -  \|y_{I_1} - \Pi_{I_1}^{(r - 1)} y_{I_1}\|^2 -  \|y_{I_2} -  \Pi_{I_2}^{(r - 1)} y_{I_2}\|^2,
			\end{split}\end{equation} where $\Pi_I^{(r-1)}$ denotes the projection operator onto the subspace of $(r - 1)$-th degree polynomials and the corresponding estimator becomes
            \begin{equation*}
                \hat \theta_{i_0} = \Pi_I^{(r-1)}y_I \mbox{ where } I = {\arg \max}_{(T_y^{(r-1)}(J))^2 \le \lambda, J \ni i_0} \, |J|.
            \end{equation*}
We show in this work that the %
idea of choosing the largest interval containing any given $i_0 \in \{1, \ldots, n\}$ satisfying the 
criterion $(T_y^{(r-1)}(I))^2 \leq \lambda$ yields a good bandwidth for all degrees $r \geq 1$ and suitable families of signals $\theta^\ast$ 
using a unified argument. %

At a high level, our bandwidth selection method is similar in spirit to Lepski's 
method and its many incarnations in the sense that we choose our final 
bandwidth to be the largest interval satisfying a certain ``admissibility'' criterion. However, we would like to emphasize that the philosophy which 
leads us to our proposed criterion is markedly different from the one 
underlying Lepski's and related methods. In the next subsection, we elaborate 
these connections/differences further and give a summary of our main 
contributions.

\subsection{Our contributions vis-\`{a}-vis related works}\label{subsec:contribution}

There are only a handful of nonparametric regression methods which are \emph{provably} adaptive to the local H\"{o}lder smoothness exponent at each point in the domain with a single global choice of the tuning parameter value as well as being efficiently implementable. Our proposed method becomes a new 
member of this sparse toolbox (see our Theorem~\ref{thm:main} below).  

The first such \emph{provably} locally adaptive method was perhaps Lepski's 
method developed in \cite{lepskii1991problem, lepskii1992asymptotically} and by now 
there are several variants. Among these methods, perhaps the ``closest'' to the 
current work 
is that of Goldenshluger and Nemirovski \cite{GN97} (see also 
\cite{nemirovski2000topics}). Like our procedure, Goldenshluger and Nemirovski also 
perform local polynomial regression over a interval around any given 
location $i_0$, which is chosen as the maximal (symmetric) interval satisfying a 
certain ``goodness'' condition. We now briefly describe the motivation behind their 
notion of goodness. Let $h_0^*$ denote the ideal local bandwidth at 
$i_0$ obtained from the theoretical bias-variance trade-off of a local polynomial 
regression fit. With high probability, for any symmetric interval $I$ around $i_0$ 
of half-width at most $h_0^*$, the standard confidence set for $\theta_{i_0}^*$ (constructed from a fit over $I$) contains $\theta_{i_0}^*$. As such, the confidence 
sets contructed from fits over (symmetric) sub-intervals of $I$ have a non-empty intersection. 
Goldenshluger and Nemirovski call such intervals $I$ as ``good''. Naturally, the 
half-width of the largest good interval gives an estimate of the ideal bandwidth 
$h_0^*$.

In contrast, the bandwidth selection procedure we propose here %
is motivated by the optimality criterion in the $\ell_0$-penalized least squares problem \eqref{eq:argmin}.  Our ``goodness'' 
criterion is based on the discrepancy measure \eqref{def:Tr} which is formulated 
in terms of (squared) $\ell_2$-norms of ``interval projections'' at various 
scales/bandwidths. Although not at all obvious from its formulation, %
this recipe still attains the optimal rates 
owing to a key property satisfied by our selected interval $I$, namely,
\begin{equation}\label{eq:bse}
 T_{\theta^\ast}^{(r)}(I) \asymp \sigma \sqrt{\log n}
\end{equation}
which we refer to as the ``bandwidth selection equation''. See \S\ref{subsec:intuition} below for a detailed discussion on this at a heuristic level. A 
notable feature of \eqref{eq:bse} is that we do not need to adjust the contribution of the noise separately for different intervals $I$ unlike 
the class of methods discussed in the previous paragraph. %
It turns out that the intervals satisfying \eqref{eq:bse} automatically provide suitable control 
on the %
noise. Our procedure thus points to a {\em new} 
way %
for ensuring local adaptivity of variable bandwidth estimators. Also, one 
fortuitous consequence of basing our approach on penalized least squares is 
that we are able to leverage elementary properties of %
polynomials, %
thereby considerably simplifying our proofs.

As discussed towards the end of Section~\ref{subsec:nonparam}, some 
penalized least squares methods such as trend filtering, wavelet thresholding, jump penalized least squares, dyadic 
CART, etc. are also considered to be locally adaptive. Of these, to the best of 
our knowledge, only a particular variant of wavelet thresholding is provably 
locally adaptive~\cite{Cai99} in the sense we consider in this article. For instance, the existing MSE bounds~\cite{tibshirani2014adaptive,guntuboyina2020adaptive, ortelli2021prediction,zhang2023element} for trend filtering %
suggest that one needs to set 
the tuning parameter $\lambda$ differently in order to attain the optimal rate of 
convergence for different (smoothness) classes of functions.

Coming back to the (univariate) regression tree methods, such as dyadic CART or 
$\ell_0$-penalized least squares, our proof technique and insights suggest that 
these tree-based methods could be locally adaptive with a single ideal choice 
of the tuning parameter. We plan to investigate this in a future work.

\vspace{0.3cm}

We now summarize the main contributions of this article. 

\begin{itemize}
    \item We develop a new discrepancy measure and a criterion based on it for performing bandwidth selection inspired by the splitting criterion used in regression 
trees, thereby connecting tree based methods with variable bandwidth 
nonparametric regression.

    \item We show that our proposed estimator adapts to the local H\"{o}lder exponent and the local H\"{o}lder coefficient of the true 
    regression function.

    \item Our proof reveals a new way by which a variable bandwidth estimator can exhibit local adaptivity. %

    \item Our estimator has only one global tuning parameter $\lambda$ which, when set to $C \sigma \sqrt{\log n}$ for a small constant $C$, selects near-optimal bandwidths simultaneously at {\em all} locations. 

    \item We suggest computationally efficient versions of our method and show 
comparisons with several alternative locally adaptive methods such as 
trend filtering and wavelet thresholding. It appears that our method is competitive 
and performs significantly better than these existing methods for many types 
of signals. This leads us to believe that the proposed method is a viable and 
useful addition to the toolbox of locally adaptive nonparametric regression 
methods. We have developed an accompanying \textbf{R} package named \texttt{laser} 
which comes with a ready-to-use reference implementation of our method. 
\end{itemize}

Before concluding this section, let us point out another potential significant 
advantage of our {\em loss-function based} approach, namely that it 
naturally lends itself to other types of regression problems. Indeed, the 
squared error loss function can be replaced with more suitable loss functions 
specific to the problem at hand. For instance, we may use the $\ell_2$-loss instead of 
the squared $\ell_2$-loss as in {\em square-root lasso} \cite{belloni2011square} 
which can potentially get rid of the dependence of $\lambda$ on the noise parameter $\sigma$. We can also 
consider robustified loss functions like the quantile loss function or the 
Huber's loss function. Additionally, %
the proposed method is naturally extendable to multivariate settings. We hope to return to some of these in future works.

Overall, we believe that we provide a new take on the age old problem of optimal bandwidth selection in nonparametric regression; offering new conceptually pleasing viewpoints and insights along the way.

\subsection{Outline}
In Section~\ref{sec:results}, we formally introduce our method which we dub 
\laser{} (Locally Adaptive Smoothing Estimator for Regression) for 
convenience and discuss its properties culminating with the associated risk 
bounds in Theorem~\ref{thm:main}. In Section~\ref{sec:proof}, we prove our main result, i.e., Theorem~\ref{thm:main}. Section~\ref{sec:algsimu} is 
dedicated to computational aspects of \laser{} and simulation studies. In \S\ref{sec:pseudo}, we give a pseudo code for \laser{} as well as a computationally faster variant with comparable performance and provide a detailed analysis of their computational complexities.
In \S\ref{sec:simu}, we compare \laser{} with several popular alternative nonparametric methods via numerical experiments. We conclude with a very 
brief discussion on possible extensions of our method in different directions in Section~\ref{sec:Discussions}.

\subsection{Notation and conventions}\label{subsec:notation}
We use $[n]$ to denote the set of positive integers $\{1,2,\dots,n\}$ and $\bkt{a, b}$ to denote the (integer) interval $\{a, a + 1,\dots, b \}$ for any $a, b \in \Z$. The (real) interval $\{x \in \R: a \le x \le b\}$, where $a, b \in \R$, is denoted using the standard notation $[a, b]$. We use, in general, the bold faced $\mathbf{I}$ (with or without subscripts) to indicate a real interval, like $[0, 1]$, and $I$ to denote an integer interval, like $[n]$ or a subset thereof. In the sequel, whenever we 
speak of a {\em sub-interval of $I$} (respectively $\mathbf I$), where $I$ is an integer (respectively a real) interval, it is implicitly understood to be an integer (respectively a real) interval. For a real interval $\mathbf I$, we denote its length by $|\mathbf I|$ whereas for a {\em subset} $I$ of $[n]$, we use $|I|$ to denote its cardinality, i.e., the number of elements in $I$. Their particular usage would always be clear from the context.

For any subset $I$ of $[n]$ and $\theta = (\theta_i)_{i \in [n]}\in \R^{n}$, we let $\theta_{I} = (\theta_i)_{i \in I} \in \R^{I}$ denote its restriction to $I$. The space $\R^I$ can be canonically identified with $\R^{|I|}$ by mapping the $j$-th smallest element in $I$ to the $j$-th coordinate of vectors in $\R^{|I|}$.

In this article, we work extensively with discrete polynomial vectors. To this end, given any non-negative integer $r$ and a  subset $I$ of $[n]$, we let $S^{(r)}_I$ denote the linear subspace of discrete polynomial vectors of degree $r$ on the interval $I$, i.e.,
\begin{equation}\label{def:SIr}
    S^{(r)}_I = \Big\{\theta \in \R^{I}: \theta_i = \sum_{0 \le k \le r} a_k (\tfrac{i}{n})^k \text{ for all } i \in I  \text{ and } (a_k)_{0 \le k \le r} \in \R^{r + 1} \Big\}.
\end{equation}
We denote by ${\proj}^{(r)}_I$ the orthogonal projection onto the subspace $S^{(r)}_I$. Identifying $\R^I$ with $\R^{|I|}$ as in the previous paragraph, ${\proj}^{(r)}_I$ corresponds to a matrix of order $|I| \times |I|$. We will make this identification several times in the sequel without being explicit.

We say that a sequence of events $(E_n)_{n \ge 1}$, indexed by $n$ and  possibly depending on degree $r$ as a parameter, occurs {\em with (polynomially) high probability} (abbreviated as {\em w.h.p.}) if $\P[E_n] \ge 1 -  n^{-2}$ for all sufficiently large $n$ (depending at most on $r$). The exponent $2$ is of course 
arbitrary as we can choose {\em any} large constant by altering the values of the 
constants in our algorithm (see Theorem~\ref{thm:main} below).

Throughout the article, we use $c, C, c', C', \ldots$ to denote finite, 
positive constants that may change from one instance to the next. Numbered 
constants are defined the first time they appear and remain fixed 
thereafter. All constants are assumed to be absolute and any dependence on 
other parameters, like the degree $r$ etc. will be made explicit in 
parentheses. We prefix the subsections with \S~while referring to them.

\section{Description of \laser{} and risk bounds}\label{sec:results}
In this section, we introduce \laser{} formally, detailing the development of the estimator as a local bandwidth selector in a step-by-step manner. In \S\ref{subsec:reg_tree} we discuss the connection with Regression Trees and how it motivates \laser{}. An informal explanation for the local adaptivity of our method is given in \S\ref{subsec:intuition} aided by an illustration on a very simple yet interesting signal. Finally in  \S\ref{subsubsec:rickbnd}, we state risk bounds for \laser{} when the underlying signal is a realization of a locally H\"{o}lder regular function.

\subsection{Formal description of the method}\label{subsec:algo}
We will perform local polynomial regression of some fixed degree $r$  with a data driven bandwidth. To achieve local adaptivity w.r.t. the  regularity of the underlying signal around each point, the main issue is how to  select the bandwidth adaptively across different locations. We now  describe a way of setting the bandwidth at any given point. Let us recall from the introduction the fixed (equispaced) design model
\begin{equation}\label{eq:ythetaep}
    y_i = \theta^*_i + \epsilon_i = f(\tfrac{i}{n}) + \epsilon_i,\: 1\le i \le 
n,
\end{equation} 
where $f: [0, 1] \to \R$ is the underlying regression function and $\ep_i$'s are independent mean zero sub-Gaussian variables (see~\eqref{eq:thetaep}-\eqref{eq:psi2norm}) with sub-Gaussian norm bounded by $\sigma > 0$.

Let us recall the orthogonal projections $\Pi_I^{(r)}$ onto spaces of polynomial vectors of degree $r$ from around \eqref{def:SIr}. Now for any $I \subset [n]$ and a partition of $I$ into sets $I_1$ and $I_2 = I \setminus I_1$ and any vector $\theta \in \R^n$, let us define
			\begin{equation}\label{def:Q}
            \begin{split}
				Q^{(r)}(\theta; I_1, I) &= \|\theta_{I} - \proj^{(r)}_I \theta_{I}\|^2 -  \|\theta_{I_1} - \proj^{(r)}_{I_1} \theta_{I_1}\|^2 -  \|\theta_{I_2} - \proj^{(r)}_{I_2} \theta_{I_2}\|^2 \\&= \|\proj^{(r)}_{I_1} \theta_{I_1}\|^2 + \|\proj^{(r)}_{I_2} \theta_{I_2}\|^2 - \|\proj^{(r)}_{I} \theta_{I}\|^2 = \|\Pi_{I_1, I_2}^{(r)}\theta_I\|^2
                \end{split}
			\end{equation}            
where $\|\eta\| \stackrel{{\rm def.}}{=} (\sum_{j \in J} \eta_j^2)^{\frac12}$ denotes the usual $\ell_2$-norm of any $\eta \in \R^J$ $(J \subset [n])$ and $\Pi^{(r)}_{I_1, I_2}$ is the orthogonal projection onto the subspace ${S_{I}^{(r)}}^{\perp} \cap \big({S_{I_1}^{(r)} \oplus S_{I_2}^{(r)}}\big)$ (note in view of \eqref{def:SIr} that $S_{I}^{(r)}$ is a subspace of $S_{I_1}^{(r)} \oplus S_{I_2}^{(r)}$). Consequently, $Q^{(r)}$ (for every fixed $I$ and $I_1$) is a positive semi-definite quadratic form on  $\R^n$. We will be interested in the case where $I$ and $I_1$ are sub-intervals of $[n]$.

Next we introduce what we call a {\em local discrepancy measure}. For any $\theta \in \R^n$, let us introduce the associated {\em ($r$-th order) local discrepancy measure} on sub-intervals of $[n]$ as follows.
\begin{equation}\label{def:discrep}
				T^{(r)}_{\theta}(I) \stackrel{{\rm def.}}{=} 
                \max_{I_1,I_2}
                \sqrt{Q^{(r)}(\theta; I_1,I)}
			\end{equation}
where $\{I_1, I_2\}$ range over all partitions of $I$ into an interval $I_1$ and its complement. This definition is legitimate as $Q^{(r)}$ is positive semi-definite. Intuitively, one can think of $T^{(r)}_{\theta}(I)$ as a measure of deviation of $\theta$ from the subspace of degree $r$ polynomial vectors on the interval $I$. If $\theta$ is exactly a polynomial of degree $r$ on $I$, then $T^{(r)}_{\theta}(I) = 0$.

We now come to the precise description of our estimator. Given any location $i_0 \in [n]$ and a {\em bandwidth} $h \in \N$, let us consider the \emph{truncated} symmetric interval 
\begin{equation}\label{def:bkt}
    \bkt{i_0 \pm h} = \bkt{i_0 \pm h}_n \stackrel{{\rm def.}}{=} \bkt{(i_0 - h)\vee 1, (i_0 + h)\wedge n} \, (\subset [n]).
\end{equation} 

Our idea is to choose $I = \bkt{i_0 \pm h}$ as a potential 
interval for estimating $\theta_{i_0}^\ast$ if the local discrepancy measure $T^{(r)}_y(I)$ is small. Just checking that $T^{(r)}_y(I)$ is small is of course not enough; for instance, the singleton interval $\{i_0\}$ will have $T^{(r)}_y(\{i_0\}) = 0$. Naturally, {we are led to choosing the largest symmetric interval $I$ around $i_0$ for which $T^{(r)}_y(I)$ is still small}. To this end, let us define a threshold $\lambda \in (0, \infty)$ which would be the {\em tuning parameter} in our method. For any such threshold $\lambda$, we define the set of ``good'' bandwidths as 
\begin{equation}\label{def:good_bandwidth}
    \mathcal{G}^{(r)}(\lambda, y) = \{h \in \bkt{0, n-1} : T^{(r)}_{y}(\bkt{i_0 \pm h}) \leq \lambda\}.
\end{equation}
We now propose our optimal local bandwidth as follows.
\begin{equation}\label{def:bandwidth}
    \hat{h}_{i_0} = \hat{h}_{i_0}^{(r)}(\lambda, y) \stackrel{{\rm def.}}{=} \max \mathcal{G}^{(r)}(\lambda, y).
\end{equation}
With this choice of optimal bandwidth, our proposed estimator for $f(\tfrac{i_0}{n}) = \theta^\ast_{i_0}$ (of degree $r$) takes the following form:
\begin{equation}\label{eq:defn}
    \hat f(\tfrac{i_0}{n}) = {\hat f}_{\laser(r, \lambda)}(\tfrac{i_0}{n}) = \big(\proj^{(r)}_{\bkt{i_0 \pm \hat h_{i_0}}} y_{\bkt{i_0 \pm \hat{h}_{i_0}}}\big)_{i_0}.
\end{equation}
    
\subsection{Connection to Regression Trees}\label{subsec:reg_tree}
Our estimator is naturally motivated by the splitting criterion used in Regression Trees. In this section, we explain this connection. Note that trees are in one to one correspondence with partitions of $[n]$ in the univariate setting. One can define the \textit{Optimal Regression Tree }(ORT) estimator, described in~\cite{chatterjee2021adaptive} as a solution to the following penalized least squares problem:
\[
    \hat{\theta}_{{\rm ORT},\,\lambda}^{(r)} = \argmin_{\theta \in \R^n} \big(\|y - \theta\|^2 + \lambda k^{(r)}(\theta)\big),
\]
where $k^{(r)}(\theta)$ denotes the smallest positive integer $k$ such that if we take a partition of $[n]$ into $k$ intervals $I_1,\dots,I_k$ then the restricted vector $\theta_{I_j}$ is a degree $r$ (discrete) polynomial vector on $I_j$ for all $1 \leq j \leq k$. The version with $r = 0$ is also called jump/$\ell_0$-penalized least squares or the Potts functional minimizer; see~\cite{boysen2009consistencies}. The final tree produced is a random partition $\mathcal{P}^{(r)}$ and the final fit is obtained by performing least squares degree $r$ polynomial regression on each interval of the partition $\mathcal{P}^{(r)}$. 

We now make a key observation. If we split a resulting interval $I$ of the final ``tree'' $\mathcal{P}^{(r)}$ further into \textit{any} two intervals; one does not decrease the objective function. This turns out to be equivalent to saying that the decrease in residual sum of squares is less than a threshold (the tuning parameter) $\lambda.$ Let us call this property $({\rm P}^*)$ which the interval $I$ satisfies. 

The gain in residual sum of squares %
when splitting $I$ into two contiguous intervals $I_1,I_2$ is precisely $Q^{(r)}(y;I_1,I)$ defined 
in~\eqref{def:Q}. In view of property $({\rm P}^*)$, we see that $I$ satisfies
\[
    \max_{I_1,I_2} Q^{(r)}(y;I_1,I) \leq \lambda,
\]
where $I_1,I_2$ ranges over all splits of $I$ into two contiguous intervals. The above display naturally leads us to define the local discrepancy measure $T_{y}^{(r)}(I)$ as in~\eqref{def:discrep}. The only difference is instead of maximizing over all splits, we insist on $I_1$ being any subinterval of $I$ and $I_2 = I \cap I_1^c$ not necessarily an interval. We found that this modification simplifies our proof substantially. 

Our idea to produce locally adaptive fits is to now execute the following principle. 
For any given point, choose the largest interval containing this point which satisfies property $({\rm P}^*)$ and estimate by the mean (or higher order regression) within this interval. In effect, motivated by the splitting criterion for Regression Trees, we are proposing a principled way to perform adaptive bandwidth selection in local polynomial regression.

\subsection{Local adaptivity of \laser{} in a toy example}\label{subsec:intuition}
Intuitively, it is perhaps clear that our bandwidth~\eqref{def:bandwidth} at a given location $i_0$ is larger when $\theta^*$ is ``smoother'' around $i_0$. This smoothness is measured by the local discrepancy statistic $T^{(r)}_{y}(I)$ for various intervals $I$ centered at $i_0$. The following quantity turns out to play the role of an ``effective noise'' in our problem.
\begin{equation*}
    {\rm NOISE} \stackrel{{\rm def.}}{=} \max_{I}
    T^{(r)}_{\ep}(I).
\end{equation*}
Using concentration inequalities involving sub-Gaussian variables it can be shown 
that, this effective noise does not exceed $C \sigma \sqrt{\log n}$ w.h.p. (see 
Lemma~\ref{lem:noise_bnd} in Section~\ref{sec:proof}). In particular, the effective noise acts as an upper bound on the contribution from the 
noise that is {\em uniform} across all intervals $I \subset [n]$. Combining this with the seminorm property of $\sqrt{Q^{(r)}}$, one gets a 
valuable information on the selected bandwidth $\hat h_{i_0}$ in view of our selection rule in \eqref{def:good_bandwidth}--
\eqref{def:bandwidth}, namely
\begin{equation}\label{eq:noise_removal}
T^{(r)}_{\theta^*}(\bkt{i_0 \pm \hat{h}_{i_0}}) \asymp \sigma \sqrt{\log n}
\end{equation}
w.h.p. simultaneously for all $i_0 \in [n]$ as long as the tuning parameter $\lambda$ \textit{kills} the effective noise, as in, e.g.,
\[
    \lambda = 2\: {\rm NOISE} = C \sigma \sqrt{\log n}.
\]
Here ``$\asymp$'' in \eqref{eq:noise_removal} means that the ratio of both sides 
stays bounded away from $0$ and $\infty$. We subsequently refer to 
\eqref{eq:noise_removal} as the \textit{bandwidth selection equation}. %
See Proposition~\ref{prop:noise_threshold_decomp} in Section~\ref{sec:proof} for a 
precise formulation. 

In effect, \eqref{eq:noise_removal} says that if $\lambda = C \sigma \sqrt{\log n}$ is chosen so that it exceeds the effective noise level, then LASER selects the 
bandwidths resembling the following oracle. The oracle can see the signal 
$\theta^\ast$ itself. For every location $i_0$, the oracle starts with the smallest 
bandwidth $h = 0$ and continues to increase $h$. At each step, the oracle calculates 
the local discrepancy measure $T^{(r)}_{\theta^*}(\bkt{i_0 \pm h})$ and stops the 
first time it goes above $C \sigma \sqrt{\log n}$ to output the selected bandwidth 
at $i_0$. What is very crucial is that the stopping threshold is universal in the
sense that it does not depend on the location $i_0$ nor the underlying signal
$\theta^*$. 

We illustrate the importance of this observation with a simple yet illuminative example. To this end let us consider the function $f_{\texttt{Check}}(x) = (x - \frac12)1\{x \geq \frac12\}$. The signal version, i.e., the corresponding $\theta$ is $\theta^*_i = \frac{i - 1}{2}1\{i \ge n/2\}$.

Let us now examine local averaging which is local polynomial regression of degree $0$, i.e.,
\[
    \hat{\theta}_{i_0}(h) = \overline{y}_{\bkt{i_0 \pm h}}
\]
where $h > 0$ is some bandwidth. For any $h > 0$, one can compute explicitly the bias and 
variance of $\hat{\theta}_{i_0}(h)$ as a function of $h$. One can then check that the ideal bandwidths for any point in $[0,0.5)$ and any point in $[0.5,1]$ are $c n$ and $c n^{2/3}$ respectively. The ideal squared error rates turn out to be at most $C n^{-1}$ and $C n^{-2/3}$ respectively. So, even in this simple example, one needs to set different bandwidths in different locations to get the best possible rates of convergence.

Let us now check if the oracle selects the right bandwidths in this example. Take a point in $[0.5,1]$ such as $x = \frac{3}{4}$, i.e., $i_0 = \frac{3n}{4}$. Consider intervals of the form $I(h) = \bkt{i_0 \pm h}$. We need to find $h$ which solves the Bandwidth Selection Equation \eqref{eq:noise_removal} (with $h$ in place of $\hat h_{i_0}$). In the case of local averaging with $r = 0$, %
$Q$ admits of a simplified expression as follows:
\[
    Q^{(0)}(\theta^\ast; I_1,I) =  \frac{|I_1| |I_2|}{|I|}  \big(\overline{\theta}^{\ast}_{I_1} -  \overline{\theta}^{\ast}_{I_2}\big)^2,
\]
where $I_2 = I \setminus I_1$.

It turns out in this case, that the local discrepancy $T^{(0)}_{\theta^\ast}(I(h))$ is maximized when $I_1$ and $I_2$ are roughly of size $h/2$. Clearly, for such a pair $(I_1, I_2)$, one has
\[
    \frac{|I_1| |I_2|}{|I|} \asymp c h \text{ and } \big(\overline{\theta}_{I_1}^\ast -  \overline{\theta}_{I_2}^\ast\big) \asymp c\frac{h}{n}.
\]
This implies that
\[
    T^{(r)}_{\theta^\ast}(\bkt{i_0 \pm h}) \asymp c\frac{h^3}{n^2}.
\]
Thus, for solving~\eqref{eq:noise_removal} we need $h \asymp n^{2/3}$ which is exactly 
the right order of the bandwidth for this $i_0$.

Now, let us consider a point in $(0,0.5)$ such as $i_0 = \frac{3n}{8}.$ It is clear that if $h \leq \frac{n}{8}$ then $T^{(r)}_{\theta^*}([i_0 \pm h]) = 0$, hence the selected bandwidth is not less than $\frac{n}{8}$. This means that $h \asymp n$ which is exactly the right bandwidth size (in order) for this $i_0.$ 

To summarize, we find that solving the \emph{same} bandwidth selection 
equation~\eqref{eq:noise_removal} gives the correct bandwidth for locations 
\textit{both} in the left and right half of the domain. This illustration suggests 
that an $h$ satisfying \eqref{eq:noise_removal} is potentially the right bandwidth 
to select even for general degrees $r \geq 0$. It turns out that this intuition is 
correct and LASER precisely implements the above bandwidth selection rule. The 
underlying reason why this bandwidth selection rule works is due to the \textit{self-adaptive} growth rate of the local discrepancy measure $T^{(r)}_{\cdot}(\cdot)$ 
(see, e.g., Lemma~\ref{lem:Tr_bnd_Holder}) of which we have already seen some indication in 
the case of $r = 0$.

Our proof in Section~\ref{sec:proof} gives a unified analysis for all degrees $r \geq 0$. Since there does not seem to be a ``simple'' expression for the term $Q^{(r)}$ for higher degrees $r > 0$, the general case turns out to be more subtle. Our proof in Section~\ref{sec:proof} reveals that  the calibrated bandwidth obtained by \laser{}, as a solution to the bandwidth selection equation \eqref{eq:noise_removal}, leads to an automatic and correct balancing of the local bias and variance terms yielding the desired property of local adaptivity.

\subsection{Pointwise risk bounds for {\laser}}\label{subsubsec:rickbnd}
For theoretical risk bounds and the accompanying in-depth mathematical analysis of {\laser}, we choose to work with (locally) H\"{o}lder regular functions which have a long history in nonparametric regression, see, e.g., \cite{lepskii1991problem}, \cite{DonohoJohnKeky95} and \cite{LMS97}. Let us formally introduce this class of functions with a slightly non-standard notation for our convenience.

\begin{definition}[H\"{o}lder space]\label{def:holder}
Given any (open) sub-interval $\mathbf I$ of $[0, 1]$, $\alpha \in [0, 1]$ and $r \ge 0$ an integer, we define the H\"{o}lder space $C^{r, \alpha}(\mathbf I)$ as the class of functions $f : [0, 1] \to \R$ which are $r$-times continuously differentiable on $\mathbf I$ and furthermore the $r$-th order derivative $f^{(r)}$ is H\"{o}lder continuous with exponent $\alpha$, i.e.,
\begin{equation}\label{eq:Holder}
   |f|_{\mathbf I; r, \alpha} \stackrel{{\rm def.}}{=} \sup_{x, y \in \mathbf I, x \ne y} \frac{|f^{(r)}(x) - f^{(r)}(y)|}{|x - y|^{\alpha}} < \infty. 
\end{equation}
We call $|f|_{\mathbf I; r, \alpha}$ the $(r, \alpha)$-{\em H\"{o}lder coefficient} (or {\em norm}) of $f$ on $\mathbf I$. Notice that if \eqref{eq:Holder} holds for some $\alpha > 1$, then $|f|_{\mathbf I; r, \alpha}$ is necessarily $0$, i.e., $f^{(r)}(\cdot)$ is constant and consequently $f$ is a polynomial of degree $r$ on $\mathbf I$. For the sake of continuity, we denote the space of such functions by $C^{r, \infty}(f)$ and set $|f|_{\mathbf I; r, \infty} = 0$.
\end{definition}

Our main result (see Theorem~\ref{thm:main} below) in this paper shows that \laser{} 
adapts near-optimally to the local H\"{o}lder coefficient as well as norm of the 
underlying true signal $f$. In the sequel, for any $x \in [0,1]$ and $s > 0$, we let 
\begin{equation}\label{def:bkt2}
    [x \pm s] \stackrel{{\rm def.}}{=}[(x-s)\vee 0, (x + s) \wedge 1] \, (\subset [0, 1])
\end{equation}
(cf.~\eqref{def:bkt}).
\begin{theorem}[Local Adaptivity Result]
\label{thm:main}
Fix a degree $r \in \N$ and let $f : [0, 1] \to \R$. There exist constants $\Cl{C:lambda}$ and $\Cl{C:bnd} = \Cr{C:bnd}(r)$ such that the following holds with high probability for $\lambda = C_1 \sigma \sqrt{\log n}$. Simultaneously for all quadruplets $(i_0, s_0, r_0, \alpha_0)$ where $i_0 \in [n]$, $s_{0} \in (0, 1)$, $r_0 \in [0, r]$ an integer and $\alpha_0 \in [0, 1] \cup \{\infty\}$ such that $f \in C^{r_0, \alpha_0}([\frac{i_0}n \pm s_0])$, one has, with $\alpha = \alpha_0 + r_0$, 
\begin{equation}\label{eq:main}
	|\hat{f}(\tfrac{i_0}n) - f(\tfrac{i_0}{n})| \leq \Cr{C:bnd} \, \, \big(\sigma^{\frac{2\alpha}{2\alpha + 1}}\,|f|_{[\frac{i_0}n \pm s_0]; r_0, \alpha_0}^{\frac1{2\alpha + 1}}\, (\tfrac{\log n}{n})^{\frac{\alpha}{2\alpha + 1}} + \sigma \, (\tfrac{\log n}{ns_0})^{\frac12}\big),
\end{equation}
where $\hat{f}(\frac{i_0}n) = \hat{f}_{\laser{}(r, \lambda)}
(\tfrac{i_0}n)$ is from \eqref{eq:defn} and we interpret $0^0 = 0$.
\end{theorem}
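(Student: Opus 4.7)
My plan is to prove Theorem~\ref{thm:main} in three stages: first, rigorously establishing the bandwidth selection equation~\eqref{eq:noise_removal}; second, using a Hölder-based upper bound on $T^{(r)}_{\theta^*}$ to lower-bound $\hat h_{i_0}$ at the correct adaptive order; third, decomposing the pointwise estimation error into bias and variance, each of which I control by $C(r)\,\sigma\sqrt{\log n/\hat h_{i_0}}$ using Step~1, so that the lower bound from Step~2 then yields the main term of~\eqref{eq:main}.

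\textbf{Step 1 (Bandwidth selection equation).} I begin by working on the high-probability event on which the noise bound $\max_{I \subset [n]} T^{(r)}_\epsilon(I) \le c\,\sigma\sqrt{\log n}$ from Lemma~\ref{lem:noise_bnd} holds. Since each $\sqrt{Q^{(r)}(\,\cdot\,;I_1,I)}$ is the $\ell_2$-norm of a linear projection and is therefore a seminorm on $\R^n$, applying the triangle inequality to $y = \theta^* + \epsilon$ and maximizing over $(I_1,I_2)$ yields $|T^{(r)}_y(I) - T^{(r)}_{\theta^*}(I)| \le T^{(r)}_\epsilon(I)$. Combined with the selection rule~\eqref{def:good_bandwidth}--\eqref{def:bandwidth} and the choice $\lambda = C_1\sigma\sqrt{\log n}$ with $C_1$ sufficiently larger than $c$, this implies
\[
    T^{(r)}_{\theta^*}(\hat I_{i_0}) \le (C_1+c)\,\sigma\sqrt{\log n}, \qquad T^{(r)}_{\theta^*}(\bkt{i_0 \pm (\hat h_{i_0}+1)}) \ge (C_1-c)\,\sigma\sqrt{\log n},
\]
where $\hat I_{i_0} := \bkt{i_0 \pm \hat h_{i_0}}$ and the lower bound is invoked only when the larger bandwidth $\hat h_{i_0}+1$ is feasible. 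This is the precise form of~\eqref{eq:noise_removal}.

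\textbf{Step 2 (Hölder upper bound on $T^{(r)}$, hence lower bound on $\hat h_{i_0}$).} For any half-width $h$ with $h/n \le s_0$, Taylor's theorem applied to $f \in C^{r_0, \alpha_0}([\tfrac{i_0}{n}\pm s_0])$ provides a degree-$r_0$ polynomial within sup-distance $|f|_{r_0,\alpha_0}(h/n)^\alpha$ of $f$ on $\bkt{i_0 \pm h}$, where $\alpha := r_0 + \alpha_0$. Since $r \ge r_0$, the discrete projection $\Pi^{(r)}_J$ fixes this polynomial on any sub-interval $J$, so summing squared pointwise errors across the $\asymp h$ grid points appearing in each of the three terms of $Q^{(r)}$ yields (Lemma~\ref{lem:Tr_bnd_Holder})
\[
    T^{(r)}_{\theta^*}(\bkt{i_0 \pm h}) \le C(r)\,|f|_{[\frac{i_0}{n}\pm s_0];\,r_0,\alpha_0}\,h^{\alpha + \frac12}\,n^{-\alpha}.
\]
Plugging this into the Step~1 lower bound at bandwidth $\hat h_{i_0}+1$ produces, as long as $\hat h_{i_0}+1 \le ns_0$,
\[
    \hat h_{i_0} \ge c(r)\,\Bigl(\tfrac{\sigma\sqrt{\log n}}{|f|_{r_0,\alpha_0}}\Bigr)^{2/(2\alpha+1)}\, n^{2\alpha/(2\alpha+1)}.
\]

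\textbf{Step 3 (Bias-variance decomposition and conclusion).} I decompose
\[
    \hat f(\tfrac{i_0}{n}) - f(\tfrac{i_0}{n}) = \bigl((\Pi^{(r)}_{\hat I_{i_0}}\theta^*_{\hat I_{i_0}})_{i_0} - \theta^*_{i_0}\bigr) + (\Pi^{(r)}_{\hat I_{i_0}}\epsilon_{\hat I_{i_0}})_{i_0}.
\]
For the variance term, the row of $\Pi^{(r)}_{\hat I_{i_0}}$ indexed by the \emph{center} $i_0$ has $\ell_2$-norm of order $\hat h_{i_0}^{-1/2}$ (with constant depending on $r$, via the boundedness of discrete Legendre polynomials on symmetric integer intervals), so sub-Gaussian concentration together with a union bound over $i_0$ and candidate bandwidths gives a bound of $C(r)\sigma\sqrt{\log n/\hat h_{i_0}}$ w.h.p. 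For the bias term, I establish a pointwise bound of the form
\[
    \bigl|(\Pi^{(r)}_{\hat I_{i_0}}\theta^*_{\hat I_{i_0}})_{i_0} - \theta^*_{i_0}\bigr| \le C(r)\,T^{(r)}_{\theta^*}(\hat I_{i_0})/\sqrt{\hat h_{i_0}},
\]
which, combined with the upper half of Step~1, also yields $C(r)\sigma\sqrt{\log n/\hat h_{i_0}}$. Substituting the lower bound on $\hat h_{i_0}$ from Step~2 produces precisely the first term of~\eqref{eq:main}; in the complementary regime $\hat h_{i_0}+1 > ns_0$ (where the Step~2 Hölder control no longer applies to push $\hat h_{i_0}$ further down), the same $\sqrt{\log n/\hat h_{i_0}}$ bound on both bias and variance is already $\le C(r)\,\sigma\sqrt{\log n/(ns_0)}$, which is the second term.

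\textbf{Main obstacle.} The principal technical hurdle is the pointwise bias bound in Step~3: the analogous $\ell_2$-bound $\|\theta^*_I - \Pi^{(r)}_I \theta^*_I\| \ge T^{(r)}_{\theta^*}(I)$ follows immediately from the definition, but an $\ell_\infty$-bound at the \emph{center} requires exploiting the symmetry of $\hat I_{i_0}$ around $i_0$ together with a quantitative local stability estimate for the maximizing partition in~\eqref{def:discrep}. This bound is the mechanism by which the BSE alone controls the bias, which is crucial because a matching Hölder \emph{lower} bound on $T^{(r)}_{\theta^*}$ (which would be needed to upper-bound $\hat h_{i_0}$ directly) fails in general --- for example, whenever $f$ happens to agree with a low-degree polynomial on a sub-interval of $\hat I_{i_0}$.
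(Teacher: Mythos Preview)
Your Steps~1 and~2 match the paper's Proposition~\ref{prop:noise_threshold_decomp} and Lemma~\ref{lem:Tr_bnd_Holder} essentially verbatim, and your variance bound in Step~3 is the same as the paper's Stage~1 computation via $(\Pi^{(r)}_I)_{i_0,i_0}\le C(r)/|I|$.

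The genuine gap is your pointwise bias bound
\[
\bigl|(\Pi^{(r)}_{\hat I_{i_0}}\theta^*_{\hat I_{i_0}})_{i_0}-\theta^*_{i_0}\bigr|\le C(r)\,\frac{T^{(r)}_{\theta^*}(\hat I_{i_0})}{\sqrt{\hat h_{i_0}}}.
\]
This inequality is false as a deterministic statement, and neither symmetry of $\hat I_{i_0}$ around $i_0$ nor any ``stability of the maximizing partition'' can rescue it. Take $r=0$, $I=\bkt{-m,m}$, $i_0=0$, and $\theta^*$ the unit spike at $0$. The bias at the center is $\overline{\theta^*_I}-\theta^*_0\approx -1$, while the maximum of $Q^{(0)}$ is attained at $I_1=\{0\}$ and equals $\tfrac{2m}{2m+1}\approx 1$, so $T^{(0)}_{\theta^*}(I)/\sqrt{|I|}\approx (2m+1)^{-1/2}\to 0$. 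The same phenomenon persists for higher $r$. In the setting of the theorem this matters because $\hat I_{i_0}$ can extend far beyond $\bkt{n\mathbf I_0}$ (whenever $f$ is smoother than $C^{r_0,\alpha_0}$ on a larger neighborhood), and then you have no H\"older control on $\theta^*$ over $\hat I_{i_0}$ to fall back on; your ``complementary regime'' argument presupposes exactly the bound that fails.

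The paper's fix is to \emph{not} try to bound the bias by the BSE alone. It introduces an intermediate interval $I_1\subset \hat I_{i_0}\cap\bkt{n\mathbf I_0}$ of the ideal adaptive width and splits
\[
{\rm B}_{i_0}=\underbrace{(\Pi^{(r)}_{\hat I_{i_0}}\theta^*)_{i_0}-(\Pi^{(r)}_{I_1}\theta^*)_{i_0}}_{{\rm B}_{i_0,1}}+\underbrace{(\Pi^{(r)}_{I_1}\theta^*)_{i_0}-\theta^*_{i_0}}_{{\rm B}_{i_0,2}}.
\]
The first piece is bounded by $C(r)\,T^{(r)}_{\theta^*}(\hat I_{i_0})/\sqrt{|I_1|}$ (Lemma~\ref{lem:T1}; note the denominator is $\sqrt{|I_1|}$, not $\sqrt{|\hat I_{i_0}|}$) and then by the upper BSE. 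The second piece is bounded directly by the H\"older assumption on $\mathbf I_0$ (Lemma~\ref{lem:T2}), which is legitimate because $I_1\subset\bkt{n\mathbf I_0}$. Balancing $|I_1|$ yields~\eqref{eq:main}. So the H\"older hypothesis enters the bias bound in an essential way, not only through the lower bound on $\hat h_{i_0}$.
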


Let us now discuss some aspects of the above theorem.
\begin{itemize}
    \item Our bound achieves near optimal sample complexity. For instance, if $f \in C^{r_0, \alpha_0}([0, 1])$ is globally H\"{o}lder continuous and we ignore the dependence on the H\"{o}lder coefficient of $f$ or the noise strength $\sigma$, then the risk bound in \eqref{eq:main} reads as $C(r)(\frac{\log n}{n})^{\frac{\alpha}{2\alpha + 1}}$ which is known to be the minimax optimal rate up to logarithmic factors (see, e.g.,~\cite{DonohoJohnKeky95}).

    \item We are mainly interested in the cases where the H\"older exponents are different at different points $i_0$ in the domain, i.e., $r_0, \alpha_0$ can depend on $i_0$. One can think of $s_0$ as typically $O(1)$ in any reasonable example. The main point we emphasize here is that our bound at different points $i_0$ adapts optimally to $r_0, \alpha_0$ simultaneously under the \textit{same} choice of $\lambda.$ The first term gives the optimal rate up to logarithmic factors and the second term gives a parametric rate and hence is a lower order term. 

    \item The degree $r$ of the estimator is chosen by the user. Once chosen, \laser{} adapts to any local H\"older degree $r_0 \leq r$ and any Holder exponent $\alpha_0 \in [0,1] \cup \{\infty\}.$ 
    
    \item The logarithmic factor is known to be necessary if one wants to adapt to all levels of H\"older exponent $\alpha_0 \in [0, 1]$. It appears that the logarithmic factor $(\log n)^{\frac{\alpha_0 + r_0}{2(\alpha_0 + r_0) + 1}}$ that we incur in \eqref{eq:main} is the best possible (see~\citep{lepskii1991problem}).

    \item The case $\alpha_0 = \infty$ is particularly interesting. Let us recall from Definition~\ref{def:holder} that $f$ is locally a polynomial of degree (at most) $r$ in this case so that $|f|_{[\frac{i_0}n \pm s_0; r, \infty]} = 0$. Consequently, we recover the parametric rate from \eqref{eq:main}.

    \item Although we are unaware of any result on the optimal dependence of the risk in terms of the H\"older coefficient $|f|_{[\frac{i_0}n \pm s_0; r_0, \alpha]}$, it is clear that our bound gets better for smoother functions with smaller H\"{o}lder coefficients. 
\end{itemize}

\section{Proof of the main result}\label{sec:proof}
Our proof of Theorem~\ref{thm:main} proceeds through four distinct stages. Revisit \S\ref{subsec:algo} to recall the relevant definitions.

\smallskip

\noindent {\bf Stage $1$: A bias-variance type decomposition.} We can write the 
estimation error as
\begin{equation}\label{eq:bias_noise0}
\begin{split}
\big|\hat f(\tfrac{i_0}n) - \theta_{i_0}^\ast\big| &\stackrel{\eqref{eq:defn}}{=} 
\big|\big(\proj^{(r)}_{\bkt{i_0 \pm \hat h_{i_0}}} y_{\bkt{i_0 \pm 
\hat{h}_{i_0}}}\big)_{i_0} - \theta_{i_0}^\ast\big| \\
&\stackrel{\eqref{eq:ythetaep}}{=} \big| (\big(\proj^{(r)}_{\bkt{i_0 \pm \hat 
h_{i_0}}} \theta^\ast_{\bkt{i_0 \pm \hat{h}_{i_0}}}\big)_{i_0} - 
\theta_{i_0}^\ast) + \big(\proj^{(r)}_{\bkt{i_0 \pm \hat h_{i_0}}} 
\epsilon_{\bkt{i_0 \pm \hat{h}_{i_0}}}\big)_{i_0} \big| \\
& \,\, \leq \big| (\big(\proj^{(r)}_{\bkt{i_0 \pm \hat 
h_{i_0}}} \theta^\ast_{\bkt{i_0 \pm \hat{h}_{i_0}}}\big)_{i_0} - 
\theta_{i_0}^\ast) \big| + \big| \big(\proj^{(r)}_{\bkt{i_0 \pm \hat h_{i_0}}} 
\epsilon_{\bkt{i_0 \pm \hat{h}_{i_0}}}\big)_{i_0} \big|,
\end{split}
\end{equation}
As we explain below, we have
\begin{equation}\label{Piep_bnd}
\big| \big(\proj^{(r)}_{\bkt{i_0 \pm \hat h_{i_0}}} 
\epsilon_{\bkt{i_0 \pm \hat{h}_{i_0}}}\big)_{i_0} \big| \le C(r) \sigma \sqrt{\frac{\log n}{\hat{l}_{i_0}}} \quad \mbox{w.h.p.}
\end{equation}
where $\hat{l}_{i_0}$ denotes the length of the (random) interval $\bkt{i_0 \pm 
\hat{h}_{i_0}}$ (which may be different from $2\hat{h}_{i_0}$ in view of 
Definition~\ref{def:bkt}). Combined with \eqref{eq:bias_noise0}, this implies
\begin{equation}\label{eq:bias_noise}
\begin{split}
\big|\hat f(\tfrac{i_0}n) - \theta_{i_0}^\ast\big| \le \big| (\big(\proj^{(r)}_{\bkt{i_0 \pm \hat 
h_{i_0}}} \theta^\ast_{\bkt{i_0 \pm \hat{h}_{i_0}}}\big)_{i_0} - 
\theta_{i_0}^\ast) \big| + C(r) \sigma \sqrt{\frac{\log n}{\hat{l}_{i_0}}} =: {\rm 
B}_{i_0} + {\rm N}_{i_0} \,\,\,\, \mbox{w.h.p.}
\end{split}
\end{equation}

Let us now verify \eqref{Piep_bnd}. Fix any interval $I \subset [n]$ containing $i_0$. We can write $\big(\proj^{(r)}_{I} \epsilon_{I}\big)_{i_0} = \sum_{j \in I}
(\proj^{(r)}_{I})_{i_0, j} \epsilon_j$ as a linear combination of $\{\epsilon_j: j \in 
I\}$ where $(\proj^{(r)}_{I})_{i, j}$ denotes the $(i, j)$-th element of the matrix 
corresponding to $\proj^{(r)}_{I}$ (see below \eqref{def:SIr}). Since $\ep_j$'s are 
independent sub-Gaussian variables with sub-Gaussian norm bounded by $\sigma$ (recall \eqref{eq:psi2norm}), it 
follows from a standard application of the Cauchy-Schwarz inequality that the sub-
Gaussian norm of $\big(\proj^{(r)}_{I} \epsilon_{I}\big)_{i_0}$ is bounded by $\sigma 
\sqrt{\sum_{j \in I} (\proj^{(r)}_{I})_{i_0, j}^2}$. Now note that 
\begin{equation*}
\sum_{j \in I} (\proj^{(r)}_{I})_{i_0, j}^2 = \sum_{j \in I} (\proj^{(r)}_{I})_{i_0, j} (\proj^{(r)}_{I})_{j, i_0} = (\proj^{(r)}_{I})_{i_0, i_0}^2 = (\proj^{(r)}_{I})_{i_0, i_0}.
\end{equation*}
In the first equality we used the fact that the orthogonal projection matrix 
$\proj^{(r)}_{I}$ is symmetric whereas in the last equality we used that it is 
idempotent. Next, using a property about the subspace of discrete polynomials stated 
in Lemma~\ref{lem:poly1} below, we obtain
$$(\proj^{(r)}_{I})_{i_0, i_0} \leq \frac{C(r)}{|I|}.$$
Therefore, $\sqrt{|I|}\big(\proj^{(r)}_{I} \epsilon_{I}\big)_{i_0}$ is a sub-Gaussian variable with sub-Gaussian norm bounded by $C(r) \sigma$ for {\em any} 
interval $I$ containing $i_0$. Since the number of such intervals is at most $n^2$, 
it follows from standard results on the extrema of sub-Gaussian random variables 
(see, e.g., \cite[Exercise~2.12]{wainwright2019high}) that 
\begin{equation*}
\sup_{I}\sqrt{|I|}\big(\proj^{(r)}_{I} \epsilon_{I}\big)_{i_0} \le C(r)\sigma \sqrt{\log n} \quad \mbox{w.h.p.}
\end{equation*}
whence \eqref{Piep_bnd} follows.

Now going back to the bound \eqref{eq:bias_noise}, one can think of ${\rm B}_{i_0}$ and ${\rm N}_{i_0}$ as the bias and variance (standard deviation) components of the 
estimation error respectively {\em if} we disregard the randomness of $\hat{h}_{i_0}$. The bias term ${\rm B}_{i_0}$ would generally become 
larger as $\hat{l}_{i_0}$ increases, whereas the variance term would decrease. We shall 
explicitly bound the bias and variance terms separately at later stages. For this, 
we first need good (deterministic) upper and lower bounds on the bandwidth 
$\hat{h}_{i_0}$ or equivalently the length $\hat{l}_{i_0}$. The first step towards 
this is the bandwidth selection equation which we informally introduced in 
\eqref{eq:noise_removal}.

\smallskip

\noindent {\bf Stage $2$: Bandwidth selection ``equation''.} The following 
proposition governs our selected bandwidths. 
\begin{proposition}\label{prop:noise_threshold_decomp}
There exists an absolute constant %
$\Cl{C:noise} \in (0, \infty)$ such that for any %
$\lambda \in (0, \infty)$, we have
\begin{equation}\label{eq:noise_threshold_decomp_ub}
\begin{split}
&T^{(r)}_{\theta^\ast}(\bkt{i_0 \pm \hat h_{i_0}}) \le \lambda + 
\Cr{C:noise}\sigma\sqrt{\log n}
\end{split}
\end{equation}
w.h.p. simultaneously for all $i_0 \in [n]$ where $\hat h_{i_0} = 
\hat{h}_{i_0}^{(r)}(\lambda, y)$ is %
from \eqref{def:bandwidth}. %
Furthermore, unless $\bkt{i_0 \pm \hat h_{i_0}} = [n]$, we also have
\begin{equation}\label{eq:noise_threshold_decomp_lb}
T^{(r)}_{\theta^\ast}(\bkt{i_0 \pm (\hat h_{i_0} + 1)}) \ge \lambda - 
\Cr{C:noise}\sigma\sqrt{\log n}
\end{equation}
w.h.p. simultaneously for all $i_0 \in [n]$.
\end{proposition}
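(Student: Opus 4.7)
The plan is to derive both bounds from a single clean triangle-inequality argument that decomposes $T^{(r)}_y$ into the ``signal part'' $T^{(r)}_{\theta^\ast}$ and a ``noise part'' $T^{(r)}_\epsilon$, where the latter can be controlled uniformly over all intervals by the anticipated noise lemma (Lemma~\ref{lem:noise_bnd}).

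The key structural observation is that for each fixed pair $(I_1, I)$, the map $\theta \mapsto \sqrt{Q^{(r)}(\theta; I_1, I)} = \|\Pi^{(r)}_{I_1, I_2}\theta_I\|$ is a seminorm on $\R^n$, being the composition of a linear restriction/projection and the $\ell_2$-norm. Applied to $y = \theta^\ast + \epsilon$, the ordinary triangle inequality gives
\begin{equation*}
\Bigl|\sqrt{Q^{(r)}(y; I_1, I)} - \sqrt{Q^{(r)}(\theta^\ast; I_1, I)}\Bigr| \le \sqrt{Q^{(r)}(\epsilon; I_1, I)} \le T^{(r)}_\epsilon(I).
\end{equation*}
Maximizing over the splits $I_1$ on each side (using in one direction that adding a non-negative quantity does not decrease the max, and in the other that the same maximizer can be used to bound the difference), we obtain the ``two-sided stability'' estimate
\begin{equation*}
\bigl|T^{(r)}_y(I) - T^{(r)}_{\theta^\ast}(I)\bigr| \le T^{(r)}_\epsilon(I)
\end{equation*}
for every sub-interval $I$ of $[n]$.

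Next I would invoke the (forthcoming) noise bound, namely $\sup_{I \subset [n]} T^{(r)}_\epsilon(I) \le \Cr{C:noise}\sigma\sqrt{\log n}$ w.h.p., which holds uniformly over \emph{all} sub-intervals $I$ via a union bound over at most $n^2$ intervals combined with the sub-Gaussian control of each $\sqrt{Q^{(r)}(\epsilon; I_1, I)}$ (this is precisely where the effective-noise lemma does its work). On this high-probability event, the proposition follows immediately from the definition of $\hat h_{i_0}$. For the upper bound \eqref{eq:noise_threshold_decomp_ub}: since $\hat h_{i_0} \in \mathcal G^{(r)}(\lambda, y)$, one has $T^{(r)}_y(\bkt{i_0 \pm \hat h_{i_0}}) \le \lambda$, and the stability estimate yields $T^{(r)}_{\theta^\ast}(\bkt{i_0 \pm \hat h_{i_0}}) \le \lambda + \Cr{C:noise}\sigma\sqrt{\log n}$. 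For the lower bound \eqref{eq:noise_threshold_decomp_lb}: as $\hat h_{i_0}$ is the \emph{maximum} of $\mathcal G^{(r)}(\lambda, y)$, when $\bkt{i_0 \pm \hat h_{i_0}} \ne [n]$ we have $\hat h_{i_0} + 1 \notin \mathcal G^{(r)}(\lambda, y)$, hence $T^{(r)}_y(\bkt{i_0 \pm (\hat h_{i_0}+1)}) > \lambda$, and the same stability estimate in reverse gives $T^{(r)}_{\theta^\ast}(\bkt{i_0 \pm (\hat h_{i_0}+1)}) \ge \lambda - \Cr{C:noise}\sigma\sqrt{\log n}$. Because the noise event is uniform in $I$, both conclusions hold simultaneously for all $i_0 \in [n]$ without any further union bound.

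The only substantive step is the uniform noise bound, which is the real content of Lemma~\ref{lem:noise_bnd}; everything else here is a direct consequence of the seminorm structure of $\sqrt{Q^{(r)}}$ and the greedy definition of $\hat h_{i_0}$. I therefore expect no genuine obstacle in this proposition itself -- the delicate analytic work is pushed into the noise lemma, which is why formulating $T^{(r)}$ through the quadratic form $Q^{(r)}$ (rather than, say, through pointwise bias/variance estimates as in Lepski-type procedures) pays off.
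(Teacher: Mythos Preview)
Your proposal is correct and follows essentially the same approach as the paper: the seminorm property of $\sqrt{Q^{(r)}}$ is used with the triangle inequality on $y=\theta^\ast+\epsilon$ to get the two-sided stability estimate $|T^{(r)}_y(I)-T^{(r)}_{\theta^\ast}(I)|\le T^{(r)}_\epsilon(I)$, after which Lemma~\ref{lem:noise_bnd} and the definition of $\hat h_{i_0}$ as the maximum of $\mathcal G^{(r)}(\lambda,y)$ give both \eqref{eq:noise_threshold_decomp_ub} and \eqref{eq:noise_threshold_decomp_lb} directly.
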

The inequalities \eqref{eq:noise_threshold_decomp_ub}--\eqref{eq:noise_threshold_decomp_lb} tell us that while the bandwidth is selected as per \eqref{def:bandwidth} by choosing the largest interval $I$ whose local discrepancy $T^{(r)}_{y}(I)$ w.r.t. {\em the observation vector $y$} is at most $\lambda$, its local discrepancy $T^{(r)}_{\theta^\ast}(I)$ w.r.t. {\em the underlying signal} $\theta^*$ is ``almost'' equal to $\lambda$ (in order) provided the latter 
exceeds the effective noise level $C \sigma \sqrt{\log n}$ (see 
Lemma~\ref{lem:noise_bnd} and Remark~\ref{rmk:noise_bnd} below). %
This reveals a key \emph{self-normalization} property of the quantity 
$T^{(r)}_{\theta^\ast}(\bkt{i_0 \pm \hat h_{i_0}})$ in the sense that it does not 
depend on the signal $\theta^*$, the location $i_0$ or the width $\hat h_{i_0}$. 
This property is thus extremely useful for obtaining suitable bounds on $\hat 
h_{i_0}$ (or $\hat l_{i_0}$) and is the main driver of the local adaptivity of our 
method as we will see in the upcoming stages.

Let us now give the proof of Proposition~\ref{prop:noise_threshold_decomp} which 
requires some preparation. Let us recall the quadratic form $Q^{(r)}(\theta; I_1, I)$
from \eqref{def:Q} where $I_1 \subset I \subset [n]$ (not necessarily 
intervals). Also recall the definition of the local discrepancy measure 
$T^{(r)}_{\theta}(I)$ (of degree $r$) from \eqref{def:discrep}. The following lemma shows that $T^{(r)}_{\epsilon}(I)$ is small uniformly over 
$I$ w.h.p. where $\epsilon = (\epsilon_i)_{i \in [n]} \in \R^n$ is the vector of 
noise from \eqref{eq:ythetaep}.

\begin{lemma}\label{lem:noise_bnd}
We have,
\begin{equation*}%
\max_{I} T_{\epsilon}^{(r)}(I) \le C %
\sigma \sqrt{\log n} \:\: \text{w.h.p.}
\end{equation*}
where $I$ ranges over all sub-intervals of $[n]$ and $\ep$ is as in 
\eqref{eq:ythetaep}.
\end{lemma}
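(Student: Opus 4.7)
The plan is to rewrite each $Q^{(r)}(\epsilon; I_1, I)$ as the squared norm of a projection of $\epsilon_I$ onto a subspace of dimension at most $r+1$, then control this via standard sub-Gaussian concentration, and finally take a union bound over all candidate pairs $(I, I_1)$.

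First, I would fix a sub-interval $I \subset [n]$ together with a sub-interval $I_1 \subset I$ and set $I_2 = I \setminus I_1$. The last expression in~\eqref{def:Q} yields $Q^{(r)}(\epsilon; I_1, I) = \|\Pi^{(r)}_{I_1, I_2} \epsilon_I\|^2$, where $\Pi^{(r)}_{I_1, I_2}$ is the orthogonal projection (within $\R^I$) onto $W := W(I, I_1) = {S_I^{(r)}}^\perp \cap (S_{I_1}^{(r)} \oplus S_{I_2}^{(r)})$. Because $\dim S_J^{(r)} \leq r+1$ for every $J \subseteq [n]$ and $S_I^{(r)} \subseteq S_{I_1}^{(r)} \oplus S_{I_2}^{(r)}$, we have
\[
\dim W \,=\, \dim(S_{I_1}^{(r)} \oplus S_{I_2}^{(r)}) - \dim S_I^{(r)} \,\leq\, 2(r+1) - \dim S_I^{(r)} \,\leq\, r+1,
\]
with equality in the generic case $|I_1|, |I_2|, |I| \geq r+1$. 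In the remaining degenerate cases one of the $S_\cdot^{(r)}$'s coincides with its ambient coordinate space, which only makes $W$ smaller or trivial.

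Next, I would invoke a standard concentration bound: for any fixed subspace $V \subset \R^I$ of dimension $d$,
\[
\|\Pi_V \epsilon_I\|^2 \,\leq\, C \sigma^2 (d + t) \quad \text{with probability at least } 1 - e^{-t},
\]
which follows from a $1/4$-net on the unit sphere of $V$ (of cardinality at most $9^d$) together with the fact that, under~\eqref{eq:psi2norm}, $\langle v, \epsilon_I\rangle$ is a $\sigma$-sub-Gaussian random variable for every unit vector $v$. Applying this with $V = W(I, I_1)$ and $t = C_0(r) \log n$ for a sufficiently large $C_0(r)$, and union-bounding over the at most $n^4$ pairs $(I, I_1)$ (each specified by four endpoints in $[n]$), one obtains
\[
\max_{I,\, I_1 \subset I} Q^{(r)}(\epsilon; I_1, I) \,\leq\, C(r)\, \sigma^2 \log n \quad \text{w.h.p.}
\]
Taking the square root and maximising over $I_1$ and then over $I$ delivers the claimed bound on $\max_I T^{(r)}_\epsilon(I)$.

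The only step that is not entirely routine is the dimension count $\dim W \leq r+1$, which encapsulates the geometric heart of the lemma: the additional flexibility gained by fitting a degree-$r$ polynomial separately on $I_1$ and on $I_2$, beyond fitting a single degree-$r$ polynomial on $I$, lives in a subspace of dimension at most $r+1$, independently of $|I|$. Once this is in hand, the rest is sub-Gaussian concentration on a low-dimensional subspace combined with a polynomial-size union bound, so I do not anticipate any serious obstacle.
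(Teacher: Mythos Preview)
Your proposal is correct and follows essentially the same route as the paper: represent $Q^{(r)}(\epsilon;I_1,I)$ as $\|\Pi^{(r)}_{I_1,I_2}\epsilon_I\|^2$, observe that the range of $\Pi^{(r)}_{I_1,I_2}$ has dimension at most $r+1$, apply sub-Gaussian concentration, and union-bound over the at most $n^4$ pairs $(I,I_1)$. The only difference is the concentration tool: the paper invokes the Hanson-Wright inequality (with $\|\Pi^{(r)}_{I_1,I_2}\|_{\mathrm{Fr}}^2=\mathrm{rank}\,\Pi^{(r)}_{I_1,I_2}\le r+1$ and $\|\Pi^{(r)}_{I_1,I_2}\|\le 1$), whereas you use an $\epsilon$-net on the unit sphere of the $(r+1)$-dimensional range; your argument is slightly more elementary but otherwise equivalent, and your treatment of the degenerate small-interval cases is in fact a bit more careful than the paper's.
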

\begin{remark}\label{rmk:noise_bnd}
The quantity $\max_{I} T^{(r)}_{\epsilon}(I)$ plays the role of \emph{effective noise} in our analysis and the bound $C \sigma \sqrt{\log n}$ thus is the effective noise level in our problem.    
\end{remark}

\begin{proof}
Consider a sub-interval $I$ of $[n]$ and a partition of $I$ into $I_1$ and $I_2$ where $I_1$ is an interval. From definition \eqref{def:Q}, we have
\begin{equation}\label{eq:Qformmat}
    Q^{(r)}(\theta; I_1, I) = \|\Pi_{I_1, I_2}^{(r)}\theta_I\|^2,
\end{equation}
where $\Pi_{I_1, I_2}^{(r)}$ is the orthogonal projection onto the subspace ${S_{I}^{(r)}}^{\perp}\cap \big({S_{I_1}^{(r)} \oplus S_{I_2}^{(r)}}\big)$. Assuming that $\epsilon$ is a vector of independent centered sub-Gaussian random variables with sub-Gaussian norm bounded by $\sigma$ (see \eqref{eq:psi2norm}), we obtain as a consequence of the Hanson-Wright inequality (cf. Theorem 2.1 in \cite{rudelson2013hanson}) that 
\begin{equation}\label{eq:sqrtQrttail}
    \sqrt{Q^{(r)}(\epsilon; I_1, I)} - \|\Pi_{I_1, I_2}^{(r)}\|_{{\rm Fr}} = \|\Pi_{I_1, I_2}^{(r)}\epsilon\| - \sigma\|\Pi_{I_1, I_2}^{(r)}\|_{{\rm Fr}} \text{ is sub-Gaussian}
\end{equation}
with sub-Gaussian norm bounded by $C\sigma^2 \|\Pi_{I_1, I_2}^{(r)}\|$, where, for any operator %
on $\R^I$ %
or equivalently a $|I| \times |I|$ matrix $X$ (see below \eqref{def:SIr} to recall our convention),  $\|X\|$ denotes %
the ($\ell_2$-)operator norm whereas $$\|X\|_{{\rm Fr}} \stackrel{{\rm def.}}{=}\sqrt{{\rm Tr}(X^\top X)}$$ 
is the Frobenius (or the Hilbert-Schmidt) norm.

Since $\Pi_{I_1, I_2}^{(r)}$ is an orthogonal projection, we have 
\begin{equation}\label{eq:Bop}
\|\Pi_{I_1, I_2}^{(r)}\| \le %
1.
\end{equation}
Also, 
\begin{equation}\label{eq:BFr}
\|\Pi_{I_1, I_2}^{(r)}\|_{{\rm Fr}}^2 %
= \mathrm{Tr}(%
\Pi_{I_1, I_2}^{(r)}) %
= r+1
\end{equation}
where in the final step we used the property that the trace of a 
projection (idempotent) matrix is equal to its rank.

Now using standard facts about the maxima of sub-Gaussian random variables (see the 
proof of \eqref{Piep_bnd} above) %
and %
observing that there are at most $n^4$ many %
pairs of intervals %
$(I, I_1)$ under consideration below, we obtain from the preceding 
discussions that
\begin{align*}
    \max_I T^{(r)}_{\epsilon}(I) &\stackrel{\eqref{def:discrep}}{=} \max_I \max_{\substack{I_1, I_2 \\ I_1 \cup I_2 = I}} \sqrt{Q^{(r)}(\epsilon; I_1, I)} %
    \stackrel{\eqref{eq:Qformmat}}{=} \max_I \max_{\substack{I_1, I_2 %
    }} \|%
    \Pi_{I_1, I_2}^{(r)}\epsilon\| \\
    &\le \max_I \max_{\substack{I_1, I_2 %
    }} \|%
    \Pi_{I_1, I_2}^{(r)}\|_{{\rm Fr}} + \max_I \max_{\substack{I_1, I_2 %
    }} \bigg(\|%
    \Pi_{I_1, I_2}^{(r)} \epsilon\|_{{\rm Fr}} - \|%
    \Pi_{I_1, I_2}^{(r)}\|_{{\rm Fr}}\bigg) \\
    &\stackrel{\eqref{eq:sqrtQrttail}, \eqref{eq:Bop}}{\le} C \sigma (\sqrt{r} + 1)  + C \sigma \sqrt{\log n}. %
    \qedhere
\end{align*}
\end{proof}

Now we are in a position to give the proof of Proposition~\ref{prop:noise_threshold_decomp}.

\begin{proof}[Proof of Proposition~\ref{prop:noise_threshold_decomp}]
$\sqrt{Q^{(r)}(\theta; I_1, I)}$ is a seminorm on $\R^n$ due to 
\eqref{eq:Qformmat}. %
Since $y = \theta^\ast + \epsilon$, 
we then obtain from the triangle inequality,
\begin{equation*}%
\sqrt{Q^{(r)}(y; I_1, I)} - \sqrt{Q^{(r)}(\epsilon; I_1, I)} 
\le \sqrt{Q^{(r)}(\theta^\ast; I_1, I)} \le \sqrt{Q^{(r)}(y; I_1, I)} + \sqrt{Q^{(r)}(\epsilon; I_1, I)}.
\end{equation*}
Since $\sqrt{Q^{(r)}(\epsilon; I_1, I)} \le T^{(r)}_{\ep}(I)$ by 
definition, %
we obtain
\begin{equation*}
\sqrt{Q^{(r)}(y; I_1, I)} - T^{(r)}_{\ep}(I) \le \sqrt{Q^{(r)}(\theta^\ast; I_1, I)} \le \sqrt{Q^{(r)}(y; I_1, I)}  + T^{(r)}_{\ep}(I).
\end{equation*}
Now taking maximum over all pairs $(I_1, I_2)$ that form a partition of a 
sub-interval $I$ of $[n]$ %
with $I_1$ an interval, we get
\begin{equation*}
T^{(r)}_{y}(I) - T^{(r)}_{\epsilon}(I)  \le T^{(r)}_{\theta^\ast}(I) 
\le T^{(r)}_{y}(I) + T^{(r)}_{\epsilon}(I).
\end{equation*}
Plugging the bound on the maximum of $T^{(r)}(\epsilon, I)$ from 
Lemma~\ref{lem:noise_bnd} into this display, we obtain
\begin{equation*}
T^{(r)}_{y}(I) - C\sigma\sqrt{\log n}  \le T^{(r)}_{\theta^\ast}(I) 
\le T^{(r)}_{y}(I) + C\sigma\sqrt{\log n}
\end{equation*}
w.h.p. {\em simultaneously} for all sub-intervals $I$ of $[n]$. We can now 
conclude \eqref{eq:noise_threshold_decomp_ub} and 
\eqref{eq:noise_threshold_decomp_lb} from this in view of the definition 
of $\hat h_{i_0}$ in \eqref{def:bandwidth} (and 
\eqref{def:good_bandwidth}).
\end{proof}

\noindent {\bf Stage $3$: Bounding the variance term.} We now show how one half of the bandwidth selection equation, namely 
\eqref{eq:noise_threshold_decomp_lb}, leads to a lower bound on $\hat l_{i_0}$ and 
consequently an upper bound on the variance term ${\rm N}_{i_0}= C(r) \sigma \sqrt{\frac{\log n}{\hat{l}_{i_0}}}$ in \eqref{eq:bias_noise}. For this we first 
need an upper bound on the local discrepancy $T^{(r)}_{\theta^\ast}(\bkt{i_0 \pm 
\hat h_{i_0}})$ in terms of the length $\hat{l}_{i_0}$ via the following lemma.  %

\begin{lemma}\label{lem:Tr_bnd_Holder}
Let %
$\mathbf I$ be a sub-interval of $[0, 1]$ such that $f \in C^{r_0, 
\alpha}(\mathbf I)$ %
for some $r_0 \in [0, r]$ an integer and $\alpha \in [0, 1] \cup \{\infty\}$. Then we have,
\begin{equation}\label{eq:Tr_bnd_Holder}
T^{(r)}_{\theta^\ast}\big(%
\bkt{n\mathbf I} \big) \le \frac{|f|_{\mathbf I; r_0, \alpha}}
{r_0!} \, %
\sqrt{n |\mathbf I| + 1} \cdot |\mathbf I|^{r_0 + \alpha}
\end{equation}
(recall %
\eqref{eq:Holder} for $|f|_{\mathbf I; r_0, \alpha}$) where $\bkt{n \mathbf I} 
\stackrel{{\rm def.}}{=} n \mathbf I \cap \{1, 2, \ldots\} \, (\subset [n])$ and we 
always interpret $0^{r_0 + \alpha} = 0$.
\end{lemma}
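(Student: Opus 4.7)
The plan is to compare $\theta^\ast$ against a discrete polynomial vector, exploit that $Q^{(r)}$ annihilates polynomials of degree $\le r$, and close the argument with a pointwise Taylor bound on $\mathbf{I}$. Denote $I = \bkt{n\mathbf{I}}$, which satisfies $|I| \le n|\mathbf{I}| + 1$.

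\emph{Step 1: Polynomial approximation of $\theta^\ast$.} Fix any anchor point $x_0 \in \mathbf{I}$ and let $p$ be the $r_0$-th degree Taylor polynomial of $f$ at $x_0$. Since $f \in C^{r_0,\alpha}(\mathbf{I})$, the standard Taylor remainder estimate (splitting $f^{(r_0)}(t) = f^{(r_0)}(x_0) + (f^{(r_0)}(t) - f^{(r_0)}(x_0))$ in the integral form) gives, for every $y \in \mathbf{I}$,
\begin{equation*}
 |f(y) - p(y)| \le \frac{|f|_{\mathbf{I}; r_0, \alpha}}{r_0!}\, |y - x_0|^{r_0 + \alpha} \le \frac{|f|_{\mathbf{I}; r_0, \alpha}}{r_0!}\, |\mathbf{I}|^{r_0 + \alpha}.
\end{equation*}
The bound $B(\alpha+1, r_0) \le 1/r_0$ on the Beta function is what yields the clean $1/r_0!$ constant. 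The convention $0^{r_0+\alpha} = 0$ handles the edge case $r_0 = \alpha = 0$, and the $\alpha = \infty$ case means $f \equiv p$ on $\mathbf{I}$ so the error is zero.

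\emph{Step 2: Reduction via the quadratic form.} Define $\theta^p \in \R^n$ by $\theta^p_i = p(i/n)$. Because $p$ is a real polynomial of degree $r_0 \le r$, the restriction $\theta^p_I$ lies in $S^{(r)}_I \subset S^{(r)}_{I_1} \oplus S^{(r)}_{I_2}$ for any partition $(I_1, I_2)$ of $I$ with $I_1$ an interval. In particular, $\theta^p_I$ belongs to the \emph{kernel} of $\Pi^{(r)}_{I_1, I_2}$ (which projects onto $(S^{(r)}_I)^\perp \cap (S^{(r)}_{I_1} \oplus S^{(r)}_{I_2})$). Consequently, using the identity $Q^{(r)}(\theta; I_1, I) = \|\Pi^{(r)}_{I_1,I_2}\theta_I\|^2$ from \eqref{def:Q},
\begin{equation*}
\sqrt{Q^{(r)}(\theta^\ast; I_1, I)} = \bigl\|\Pi^{(r)}_{I_1, I_2}(\theta^\ast - \theta^p)_I\bigr\| \le \bigl\|(\theta^\ast - \theta^p)_I\bigr\|,
\end{equation*}
where the last step uses that $\Pi^{(r)}_{I_1, I_2}$ is an orthogonal projection and hence has operator norm $\le 1$. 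The crucial point is that this upper bound is completely independent of the partition $(I_1, I_2)$.

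\emph{Step 3: Pointwise to $\ell_2$.} Combining Steps 1 and 2,
\begin{equation*}
\bigl\|(\theta^\ast - \theta^p)_I\bigr\|^2 = \sum_{i \in I} \bigl(f(\tfrac{i}{n}) - p(\tfrac{i}{n})\bigr)^2 \le |I| \cdot \Bigl(\tfrac{|f|_{\mathbf{I}; r_0,\alpha}}{r_0!}\Bigr)^{\!2} |\mathbf{I}|^{2(r_0+\alpha)}.
\end{equation*}
Taking square roots, using $|I| \le n|\mathbf{I}|+1$, and then taking maximum over admissible partitions yields
\begin{equation*}
T^{(r)}_{\theta^\ast}(I) \le \frac{|f|_{\mathbf{I}; r_0,\alpha}}{r_0!}\, \sqrt{n|\mathbf{I}|+1}\, |\mathbf{I}|^{r_0+\alpha},
\end{equation*}
which is exactly \eqref{eq:Tr_bnd_Holder}.

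The only mildly delicate point is verifying the Taylor remainder estimate with the stated $1/r_0!$ constant uniformly across $\alpha \in [0,1]$; everything else is a clean consequence of the facts that $\sqrt{Q^{(r)}}$ is a seminorm which vanishes on polynomials of degree $\le r$, and that discretizing a sup-norm bound to an $\ell_2$-norm bound costs exactly a factor of $\sqrt{|I|}$.
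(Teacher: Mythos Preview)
Your proof is correct and follows essentially the same approach as the paper: both arguments exploit that $Q^{(r)}$ (equivalently $\Pi^{(r)}_{I_1,I_2}$) annihilates degree-$r$ polynomial vectors, compare $\theta^\ast$ to its Taylor approximation on $\mathbf I$, and convert the resulting $\ell_\infty$ bound to $\ell_2$ at the cost of $\sqrt{|I|}$. The only cosmetic difference is that the paper first reduces (via an invariance argument) to the case $\Pi^{(r)}_{\bkt{n\mathbf I}}\theta^\ast = 0$ and then bounds $\|\theta^\ast\|$ by the Taylor residual, whereas you subtract the Taylor polynomial directly inside the projection---your route is slightly more streamlined but the content is identical.
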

\begin{proof}
Since $\Pi^{(r)}_I$ is the orthogonal projector onto the subspace 
$S_I^{(r)}$ spanned by all polynomial vectors in $\R^I$ with degree $r$ 
(here $I \subset [n]$), it follows that
\begin{equation}\label{eq:invar}
({\rm Id} - \Pi^{(r)}_{I})\,\theta_{I} = ({\rm Id} - 
\Pi^{(r)}_{I})\,\theta'_I
\end{equation}
for any $\theta' \in \R^n$ satisfying $(\theta - \theta')_I \in 
S_I^{(r)}$ where ${\rm Id}$ is the identity operator on $\R^I$. Consequently, in view of the definition of $Q^{(r)}(\theta; 
I_1, I)$ in \eqref{def:Q} (the first expression in particular), we have
\begin{equation}\label{eq:Qinvar}
Q^{(r)}(\theta; I_1, I) = Q^{(r)}(\theta'; I_1, I)    
\end{equation}
for any such $\theta$ and $\theta'$. %
Now since 
$$\Pi_{\bkt{n\mathbf I}}^{(r)}\theta^\ast_{\bkt{n\mathbf 
I}} \in %
S_{\bkt{n\mathbf I}}^{(r)},$$
there is a polynomial $p: [0, 1] \to \R$ 
of degree %
$r$ satisfying $(\Pi_{\bkt{n\mathbf 
I}}^{(r_0)}\theta^\ast_{\bkt{n\mathbf I}})_{i} = p(\frac i n)$ for all 
$i \in \bkt{n\mathbf I}$ so that, with $\bar \theta^\ast \stackrel{{\rm 
def.}}{=} ((f - p)(\frac i n))_{i \in [n]} \in \R^n$,
\begin{equation*}
\Pi_{\bkt{n\mathbf I}}^{(r)}\bar \theta^\ast_{\bkt{n\mathbf I}} = 0.
\end{equation*}
Also, we have $|f - p|_{\mathbf I; r_0, \alpha} = |f|_{\mathbf I; r_0, 
\alpha}$ owing to its definition in \eqref{eq:Holder} as $p$ is a 
degree %
$r$ polynomial and $r_0 \le r$. Therefore, in view of \eqref{eq:Qinvar}, 
\eqref{eq:Tr_bnd_Holder} amounts to the same statement with 
$\theta^\ast$ replaced by $\bar \theta^\ast$. In other words, %
we can assume without any loss of generality that 
\begin{equation}\label{eq:PiIvanish}
\Pi_{\bkt{n\mathbf I}}^{(r)} %
\theta^\ast_{\bkt{n\mathbf I}} = 0.
\end{equation}
Now %
by \eqref{eq:Qformmat}, %
we can write
\begin{equation}\label{eq:Qr_upper_bnd}
Q^{(r)}(%
\theta^\ast, I_1, \bkt{n \mathbf I}) = %
\|\Pi_{I_1, I_2}^{(r)} \theta^\ast_{\bkt{n \mathbf I}}\|^2 \le \big\|\Pi_{I_1, 
I_2}^{(r)}\big\|^2 \,\, %
\|\theta^\ast_{\bkt{n \mathbf I}}\|^2 \le %
\|\theta^\ast_{\bkt{n \mathbf I}}\|^2
\end{equation}
for any %
interval $I_1 \subset \bkt{n \mathbf I}$ %
where, in the final step, we used that 
$\big\|\Pi_{I_1, I_2}^{(r)}\big\| \le 1$ as it is an orthogonal 
projection. Now, letting $\bkt{n \mathbf I} = \bkt{a, b}$ where $a, b 
\in [n]$, consider the vector %
${\rm Tayl}_{\bkt{n \mathbf I}}^{(r_0)}(f) \in S^{(r)}_{\bkt{n \mathbf 
I}}(\supset S^{(r_0)}_{\bkt{n \mathbf I}})$ defined as
\begin{equation}\label{def:Taylor}
({\rm Tayl}_{\bkt{n \mathbf I}}^{(r_0)}(f))_i = \sum_{0 \le k \le r_0} 
\frac{f^{(k)}(\tfrac a n)}{k!} \, \frac{(i - a)^k}{n^k} \text{ for all 
$i \in I$.}
\end{equation}
Since $f \in C^{r_0, \alpha}(\mathbf I)$, it follows from Taylor's theorem that
\begin{equation}\label{eq:taylor_approx}
\|
\theta^\ast_{\bkt{n \mathbf I}} - {\rm Tayl}_{\bkt{n \mathbf I}}^{(r_0)}
(f)\|_{\infty} \le \frac{|f|_{\mathbf I; r_0, \alpha}}{r_0!} \, |\mathbf I|^{r_0 
+ \alpha}
\end{equation}
where $\|\eta\|_\infty \stackrel{{\rm def.}}{=} \max_{j \in J}|\eta_j|$ denotes the $\ell_\infty$-norm for any $\eta \in \R^J$ ($J \subset [n]$) and hence
\begin{equation*}
\begin{split}
\|\theta^\ast_{\bkt{n \mathbf I}}\| &\stackrel{\eqref{eq:PiIvanish}}{=} \|\theta^\ast_{\bkt{n \mathbf I}} - \Pi_{\bkt{n \mathbf I}}^{(r_0)} \theta^\ast_{\bkt{n \mathbf I}}\| \le \|\theta^\ast_{\bkt{n \mathbf I}} - {\rm Tayl}_{\bkt{n \mathbf I}}^{(r_0)}(f)\| \\
&\le \|\theta^\ast_{\bkt{n \mathbf I}} - {\rm Tayl}_{\bkt{n \mathbf I}}^{(r_0)}(f)\|_\infty \, \sqrt{n |\mathbf I| + 1}  \le 
\frac{|f|_{\mathbf I; r_0, \alpha}}{r!} \, |\mathbf I|^{r_0 + \alpha} \, \sqrt{n |\mathbf I| + 1}.
\end{split}
\end{equation*}
Plugging this into \eqref{eq:Qr_upper_bnd} and taking maximum over all 
partitions $\{I_1, I_2\}$ of $\bkt{n \mathbf I}$
with $I_1$ an interval, we can deduce \eqref{eq:Tr_bnd_Holder} in view 
of the definition of $T^{(r)}_{\theta}(\bkt{n \mathbf I})$ in \eqref{def:discrep}.
\end{proof}

Now we are ready to state our bound on the variance term.

\begin{proposition}\label{prop:variance}
Under the assumptions of Theorem~\ref{thm:main}, the following bound holds:
\begin{equation}\label{eq:variance_bnd}
|{\rm N}_{i_0}| \le C(r) \, 
\big(\sigma^{\frac{2\alpha}{2\alpha + 
1}}|f|_{\mathbf I_0; r_0, 
\alpha_0}^{\frac1{2\alpha + 1}}\, (\log n)^{\frac{\alpha}{2\alpha + 1}}n^{-\frac{\alpha}
{2\alpha + 1}} + \sigma\sqrt{\log n}\,(ns_0)^{-\frac12})
\end{equation}
w.h.p. simultaneously for all %
quadruplets $(i_0, s_0, r_0, \alpha_0)$ satisfying $f \in C^{r_0, 
\alpha_0}(\mathbf I_0)$ where $\alpha = \alpha_0 + r_0$.
\end{proposition}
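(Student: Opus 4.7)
Since $|{\rm N}_{i_0}| = C(r)\sigma\sqrt{(\log n)/\hat l_{i_0}}$ by the definition in \eqref{eq:bias_noise}, the task reduces to deriving a sharp lower bound on the (random) length $\hat l_{i_0}$. My plan is to couple the lower half of the bandwidth selection equation \eqref{eq:noise_threshold_decomp_lb} with the H\"older-type upper bound on $T^{(r)}_{\theta^\ast}$ given by Lemma~\ref{lem:Tr_bnd_Holder}, and then to split on whether the selected symmetric neighborhood stays inside $\mathbf I_0 := [\tfrac{i_0}{n}\pm s_0]$ or escapes it.

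I would first fix $\Cr{C:lambda}\ge 2\Cr{C:noise}$ in the definition $\lambda = \Cr{C:lambda}\sigma\sqrt{\log n}$ so that $\lambda - \Cr{C:noise}\sigma\sqrt{\log n}\ge \tfrac{\Cr{C:lambda}}{2}\sigma\sqrt{\log n}$. The trivial case $\bkt{i_0\pm\hat h_{i_0}}=[n]$ is dealt with separately: then $\hat l_{i_0}=n$ and the second term of \eqref{eq:variance_bnd} already dominates $|{\rm N}_{i_0}|$. Otherwise, Proposition~\ref{prop:noise_threshold_decomp} gives, w.h.p.,
\begin{equation*}
T^{(r)}_{\theta^\ast}\big(\bkt{i_0\pm(\hat h_{i_0}+1)}\big) \;\ge\; \tfrac{\Cr{C:lambda}}{2}\sigma\sqrt{\log n},
\end{equation*}
and a quick inspection of the two possible boundary truncations of $\bkt{i_0\pm\hat h_{i_0}}$ shows that in this regime $\hat l_{i_0}\ge \hat h_{i_0}+1$.

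I then split into two sub-cases. \emph{Case A:} $\hat h_{i_0}+1\le ns_0$. The real interval $\mathbf I':=[\tfrac{i_0}{n}\pm \tfrac{\hat h_{i_0}+1}{n}]$ lies inside $\mathbf I_0$, so $f\in C^{r_0,\alpha_0}(\mathbf I')$ and Lemma~\ref{lem:Tr_bnd_Holder} yields
\begin{equation*}
T^{(r)}_{\theta^\ast}\big(\bkt{i_0\pm(\hat h_{i_0}+1)}\big) \;\le\; C(r)\,|f|_{\mathbf I_0;\,r_0,\alpha_0}\,(\hat h_{i_0}+1)^{1/2}\,\bigl(\tfrac{\hat h_{i_0}+1}{n}\bigr)^{\alpha}.
\end{equation*}
Combining with the previous lower bound and solving the resulting power-law inequality for $\hat h_{i_0}+1$ gives
\begin{equation*}
\hat l_{i_0} \;\ge\; \hat h_{i_0}+1 \;\gtrsim\; \Bigl(\tfrac{\sigma\sqrt{\log n}\,n^{\alpha}}{|f|_{\mathbf I_0;\,r_0,\alpha_0}}\Bigr)^{2/(2\alpha+1)},
\end{equation*}
and substituting into $|{\rm N}_{i_0}|=C(r)\sigma\sqrt{(\log n)/\hat l_{i_0}}$ recovers the first term of \eqref{eq:variance_bnd} by direct arithmetic. \emph{Case B:} $\hat h_{i_0}+1>ns_0$. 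Then $\hat l_{i_0}\ge \hat h_{i_0}+1>ns_0$ immediately, so $|{\rm N}_{i_0}|\le C(r)\sigma\sqrt{(\log n)/(ns_0)}$ gives the second term. The degenerate $\alpha_0=\infty$ case forces $|f|_{\mathbf I_0;\,r_0,\infty}=0$, making the Case~A upper bound vanish and hence Case~A itself vacuous; only Case~B survives, consistent with the convention $0^0=0$ that kills the first term.

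The only non-routine steps are the elementary verification that $\hat l_{i_0}\ge \hat h_{i_0}+1$ whenever $\bkt{i_0\pm\hat h_{i_0}}\ne[n]$ (a small case analysis on whether $i_0-\hat h_{i_0}\ge 1$ and/or $i_0+\hat h_{i_0}\le n$) and the calibration of $\Cr{C:lambda}$ large enough to keep the right-hand side of \eqref{eq:noise_threshold_decomp_lb} comparable to $\lambda$; after those, the proof reduces to a one-line inversion of the power-law bound from Lemma~\ref{lem:Tr_bnd_Holder}, and the simultaneous-in-$(i_0,s_0,r_0,\alpha_0)$ claim is inherited directly from the corresponding uniformity already built into Proposition~\ref{prop:noise_threshold_decomp}.
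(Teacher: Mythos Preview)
Your proposal is correct and follows essentially the same route as the paper: split off the boundary case where the selected interval already has length comparable to $ns_0$ (or equals $[n]$), and in the remaining case combine the lower bound \eqref{eq:noise_threshold_decomp_lb} with the H\"older upper bound from Lemma~\ref{lem:Tr_bnd_Holder} to invert a power-law inequality for the bandwidth. The only cosmetic difference is that the paper phrases the case split as $\bkt{i_0\pm(\hat h_{i_0}+1)}\subset \bkt{n\mathbf I_0}$ versus not, and bounds $\hat l_{i_0}+2$ directly rather than passing through $\hat h_{i_0}+1$ and your observation $\hat l_{i_0}\ge \hat h_{i_0}+1$; both routes are equivalent.
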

\begin{proof}%
Let us first cover the cases where either Lemma~\ref{lem:Tr_bnd_Holder} or 
\eqref{eq:noise_threshold_decomp_lb} does {\em not} apply, i.e., if $\bkt{i_0 \pm \hat{h}_{i_0}} = [n]$ or if $\bkt{i_0 \pm (\hat h_{i_0} + 1)} \not\subset \bkt{n\mathbf I_0}$ where ${\mathbf I}_0 = 
[\tfrac{i_0}n \pm s_0]$ (see below \eqref{eq:main}). Then clearly $\hat l_{i_0} \ge 
c n s_0$ and hence, in view of \eqref{Piep_bnd},
\begin{equation}\label{eq:easy_bnd}
|{\rm N}_{i_0}| \le C(r) \sigma \sqrt{\frac{\log n}{n s_0}}.
\end{equation}

So let us assume that $\bkt{i_0 \pm \hat{h}_{i_0}} \ne [n]$ and also $\bkt{i_0 \pm 
(\hat h_{i_0} + 1)} \subset \bkt{n\mathbf I_0}$. By \eqref{eq:noise_threshold_decomp_lb} in 
Proposition~\ref{prop:noise_threshold_decomp}, we have
\begin{equation*}
T^{(r)}_{\theta^\ast}(\bkt{i_0 \pm (\hat h_{i_0} + 1)}) \ge \lambda - 
\Cr{C:noise} \sigma \sqrt{\log n}
\end{equation*}
w.h.p. simultaneously for all $i_0 \in [n]$. Also since %
$\bkt{i_0 \pm (\hat h_{i_0} + 1)} \subset \bkt{n\mathbf I_0}$ in this case, %
Lemma~\ref{lem:Tr_bnd_Holder} yields us that if $f \in C^{r_0, \alpha_0}
(\mathbf I_0)$, then
\begin{equation*}
T^{(r)}_{\theta^\ast}(\bkt{i_0 \pm (\hat h_{i_0} + 1)}) \le 
\frac{|f|_{\mathbf I_0; r_0, \alpha_0}}{n^{r_0 + \alpha_0}} \, %
(\hat l_{i_0} + 2)^{r_0 + \alpha_0 + \frac 12}.
\end{equation*}
Together the last two displays imply, when %
\begin{equation}\label{choice:lambda}
\lambda = 2\Cr{C:noise} \sigma \sqrt{\log n},    
\end{equation}
that
\begin{equation}\label{eq:bandwidthlb}
\hat l_{i_0} + 2 \ge 
c\, (%
\sigma \sqrt{\log n})^{\frac1{r_0 + 
\alpha_0 + \frac12}} \, |f|_{\mathbf I_0; r_0, \alpha_0}^{-\frac1{r_0 + 
\alpha_0 + \frac12}}\, n^{\frac{r_0 + \alpha_0}{r_0 + \alpha_0 + \frac12}} 
\:\: 
\text{w.h.p.}
\end{equation}
simultaneously for all %
quadruplets $(i_0, s_0, r_0, \alpha_0)$ satisfying $f \in 
C^{r_0, \alpha_0}(\mathbf I_0)$, $\bkt{i_0 \pm (\hat h_{i_0} + 1)} \subset 
\bkt{n\mathbf I_0}$ and $\bkt{i_0 \pm \hat h_{i_0}} \ne [n]$. Plugging this into \eqref{Piep_bnd} and combining with \eqref{eq:easy_bnd}, we 
obtain~\eqref{eq:variance_bnd}.
\end{proof}

\noindent {\bf Stage $4$: Bounding the bias term.}
We now show how the other half of the bandwdith selection equation, i.e., \eqref{eq:noise_threshold_decomp_ub} leads to a bound on the bias term. 
\begin{proposition}\label{prop:bias}
Under the assumptions of Theorem~\ref{thm:main}, with $\alpha = 
\alpha_0 + r_0$, 
\begin{equation}\label{eq:biasfinalbd}
|{\rm B}_{i_0}| \le C(r) \, \big(\sigma^{\frac{2\alpha}{2\alpha + 1}} 
|f|_{\mathbf I_0; r_0, \alpha_0}^{\frac1{2\alpha + 1}}\, (\log 
n)^{\frac{\alpha}{2\alpha + 1}}n^{-\frac{\alpha}{2\alpha + 1}} + 
\sigma \sqrt{\log n}\,(ns_0)^{-\frac12})
\end{equation}
w.h.p. simultaneously for all triplets $(i_0, s_0, r_0, \alpha_0)$ satisfying 
$f \in C^{r_0, \alpha_0}(\mathbf I_0)$. 
\end{proposition}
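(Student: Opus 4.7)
The strategy is to compare the projection on the chosen interval $I := \bkt{i_0 \pm \hat h_{i_0}}$ against the projection on a well-chosen reference interval $I^\star \subset \bkt{n\mathbf I_0}$ where Taylor's theorem applies sharply. Set $h^\dagger \asymp n^{2\alpha/(2\alpha+1)}\big(\sigma\sqrt{\log n}/|f|_{\mathbf I_0;r_0,\alpha_0}\big)^{2/(2\alpha+1)}$ as the bandwidth solving the bias--variance balance $|f|_{\mathbf I_0;r_0,\alpha_0}(h/n)^\alpha \asymp \sigma\sqrt{\log n/h}$, and let $h^\star := \min\{h^\dagger, \lfloor n s_0 \rfloor\}$ and $I^\star := \bkt{i_0 \pm h^\star}$, so that $I^\star \subset \bkt{n\mathbf I_0}$. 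Up to absolute constants, the right-hand side of \eqref{eq:biasfinalbd} equals $|f|_{\mathbf I_0;r_0,\alpha_0}(h^\star/n)^\alpha + \sigma\sqrt{\log n/|I^\star|}$, so it suffices to bound $B_{i_0}$ by these two quantities.

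If $\hat h_{i_0}\le h^\star$ then $I \subset I^\star \subset \bkt{n\mathbf I_0}$ and one applies Taylor directly. Letting $p$ be the degree-$r_0$ Taylor polynomial of $f$ at $i_0/n$ and $R := \theta^\ast_\cdot - p(\cdot/n)$, the assumption $r_0 \le r$ gives $p|_I \in S^{(r)}_I$; hence $\proj^{(r)}_I \theta^\ast_I = p|_I + \proj^{(r)}_I R|_I$ and, using $R(i_0/n) = 0$, $B_{i_0} = |(\proj^{(r)}_I R|_I)_{i_0}|$. Cauchy--Schwarz combined with the diagonal bound $(\proj^{(r)}_I)_{i_0,i_0}\le C(r)/|I|$ from Lemma~\ref{lem:poly1} (exactly as in the derivation of \eqref{Piep_bnd}) and the pointwise Taylor estimate $|R(j/n)| \le \tfrac{|f|_{\mathbf I_0;r_0,\alpha_0}}{r_0!}(\hat h_{i_0}/n)^\alpha$ for $j \in I$ then yields $B_{i_0} \le C(r)|f|_{\mathbf I_0;r_0,\alpha_0}(h^\star/n)^\alpha$. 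If instead $\hat h_{i_0} > h^\star$, decompose $B_{i_0} \le B^\star + \Delta$, where $B^\star := |(\proj^{(r)}_{I^\star}\theta^\ast_{I^\star})_{i_0} - \theta^\ast_{i_0}|$ is the bias of the reference fit (handled exactly as above with $h^\star$ in place of $\hat h_{i_0}$) and $\Delta := |(\proj^{(r)}_I\theta^\ast_I)_{i_0} - (\proj^{(r)}_{I^\star}\theta^\ast_{I^\star})_{i_0}|$ is a transfer term. Unpacking \eqref{def:Q} for the partition of $I$ into the (interval) $I^\star$ and its complement $I\setminus I^\star$, the projection property of $\proj^{(r)}_I$ gives after a short calculation
\begin{equation*}
Q^{(r)}(\theta^\ast; I^\star, I) = \|\proj^{(r)}_{I^\star}\theta^\ast_{I^\star} - (\proj^{(r)}_I\theta^\ast_I)|_{I^\star}\|^2 + \|\proj^{(r)}_{I\setminus I^\star}\theta^\ast_{I\setminus I^\star} - (\proj^{(r)}_I\theta^\ast_I)|_{I\setminus I^\star}\|^2,
\end{equation*}
so the first summand is at most $T^{(r)}_{\theta^\ast}(I)^2 \le (C\sigma\sqrt{\log n})^2$ by \eqref{eq:noise_threshold_decomp_ub}. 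Since $\proj^{(r)}_{I^\star}\theta^\ast_{I^\star} - (\proj^{(r)}_I\theta^\ast_I)|_{I^\star}$ lies in $S^{(r)}_{I^\star}$, the very same Cauchy--Schwarz/Lemma~\ref{lem:poly1} argument on $I^\star$ upgrades this $\ell^2$-bound to the pointwise estimate $\Delta \le \sqrt{C(r)/|I^\star|}\cdot T^{(r)}_{\theta^\ast}(I) \le C(r)\sigma\sqrt{\log n/|I^\star|}$, which matches the targeted rate by the definition of $h^\star$.

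\textbf{Main obstacle.} The central difficulty is that \eqref{eq:noise_threshold_decomp_ub} provides only an upper bound on $T^{(r)}_{\theta^\ast}(I)$, and Lemma~\ref{lem:Tr_bnd_Holder} similarly gives only an upper bound on $T^{(r)}_{\theta^\ast}$ in terms of $|f|$ and the bandwidth, so no deterministic upper bound on $\hat h_{i_0}$ analogous to the lower bound exploited in Stage~3 is available --- one cannot bound the bias by simply invoking Taylor on $I$. The key idea is to avoid bounding $\hat h_{i_0}$ altogether: the algebraic structure of $Q^{(r)}$ transfers the upper bound on $T^{(r)}_{\theta^\ast}(I)$ into an $\ell^2$-comparison between the polynomial fits on the nested intervals $I^\star \subset I$, and the polynomial inverse inequality built into Lemma~\ref{lem:poly1} supplies the $1/\sqrt{|I^\star|}$ gain required to convert this $\ell^2$-bound into a pointwise bound at $i_0$. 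Balancing the transfer cost $\sigma\sqrt{\log n/|I^\star|}$ against the Taylor bias $|f|_{\mathbf I_0;r_0,\alpha_0}(h^\star/n)^\alpha$ on $I^\star$ is precisely what fixes $h^\star = h^\dagger$ in the interior regime and $h^\star = \lfloor ns_0 \rfloor$ at the boundary, producing the two terms in \eqref{eq:biasfinalbd}.
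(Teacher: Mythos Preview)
Your proposal is correct and follows essentially the same route as the paper: split the bias into a Taylor error on a reference interval $I^\star$ of ideal size and a ``transfer'' term comparing the fits on $I^\star$ and $I=\bkt{i_0\pm\hat h_{i_0}}$, control the transfer via the identity $Q^{(r)}(\theta^\ast;I^\star,I)\ge\|\Pi^{(r)}_{I^\star}\theta^\ast_{I^\star}-(\Pi^{(r)}_I\theta^\ast_I)|_{I^\star}\|^2$ together with \eqref{eq:noise_threshold_decomp_ub} and the polynomial $\ell_2\!\to\!\ell_\infty$ bound, and balance by choosing $h^\star$. The only cosmetic difference is that the paper dispenses with your Case~1 by invoking the lower bound \eqref{eq:bandwidthlb} from Stage~3 (so that, after shrinking the constant in $\tilde s_1$, one always has $I^\star\subset I$), whereas you handle small $\hat h_{i_0}$ directly via Taylor on $I$; both are fine.
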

The proof of Proposition~\ref{prop:bias} takes a bit of work. In order to analyze 
the bias term ${\rm B}_{i_0}$ from~\eqref{eq:bias_noise}, we will further decompose 
it as follows. For any %
interval $I_1 \subset \bkt{i_0 \pm \hat{h}_{i_0}}$ such that $i_0 \in I_1$, we can 
write
\begin{equation}\label{eq:bias_two_parts}
\begin{split}
{\rm B}_{i_0} & = \big(\proj^{(r)}_{\bkt{i_0 \pm \hat 
h_{i_0}}} \theta^\ast_{\bkt{i_0 \pm \hat{h}_{i_0}}}\big)_{i_0} - 
\theta_{i_0}^\ast \\
&= \underbrace{\big(\proj^{(r)}_{\bkt{i_0 \pm \hat 
h_{i_0}}} \theta^\ast_{\bkt{i_0 \pm \hat{h}_{i_0}}}\big)_{i_0} - 
\big(\proj^{(r)}_{I_1} \theta^\ast_{I_1}\big)_{i_0}}_{{\rm B}_{i_0,1}} + 
\underbrace{\big(\proj^{(r)}_{I_1} \theta^\ast_{I_1}\big)_{i_0} - \theta_{i_0}^\ast}_{{\rm B}_{i_0,2}}.
\end{split}
\end{equation}
The intuition behind decomposing the bias term as above is the following. So far, 
under the assumption that $\theta^*$ is locally H\"{o}lder at the location $i_0$, we 
have obtained a lower bound on the length $\hat{l}_{i_0}$ as 
in~\eqref{eq:bandwidthlb}. If we had a matching upper bound, then we could have 
directly bounded ${\rm B}_{i_0}$. However, the local H\"{o}lder smoothness does not 
preclude the signal being even {\em smoother}, i.e., the \textit{true} H\"{o}lder 
exponent may very well be larger than $\alpha_0$. In such a case, one would expect 
the length $\hat{l}_{i_0}$ to be even larger. The above decomposition identifies 
this case where the second term ${\rm B}_{i_0,2}$ corresponds to the ideal bandwidth 
case and the first term ${\rm B}_{i_0,1}$ accounts for the potentially extra bias 
arising out of extra smoothness. We will see in Lemma~\ref{lem:T2} below that ${\rm B}_{i_0,2}$ can be bounded using the H\"{o}lder smoothness condition while in 
Lemma~\ref{lem:T1}, ${\rm B}_{i_0,1}$ will be shown to be of order at most 
$T^{(r)}_{\theta^\ast}(\bkt{i_0 \pm \hat h_{i_0}}) / \sqrt{|I_1|}$. 
\begin{lemma}\label{lem:T2}
Under the same set-up as in Lemma~\ref{lem:Tr_bnd_Holder}, we have
\begin{equation}\label{eq:T2}
\big\|\big({\rm Id} - \Pi_{\bkt{n \mathbf I}}^{(r)}\big) \, \theta^\ast_{\bkt{n \mathbf I}} \big\|_{\infty} 
\le %
C(r) |f|_{\mathbf I; r_0, \alpha} |\mathbf I|^{r_0 + \alpha}.
\end{equation}
\end{lemma}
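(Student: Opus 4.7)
My plan is to reduce the statement to (i) the Taylor approximation bound already proven in Lemma~\ref{lem:Tr_bnd_Holder} and (ii) a uniform bound on the $\ell_\infty \to \ell_\infty$ operator norm of the discrete polynomial projection $\Pi^{(r)}_{\bkt{n\mathbf I}}$.

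First, since $r_0 \le r$, the Taylor vector ${\rm Tayl}_{\bkt{n\mathbf I}}^{(r_0)}(f)$ from \eqref{def:Taylor} lies in $S^{(r_0)}_{\bkt{n\mathbf I}} \subset S^{(r)}_{\bkt{n\mathbf I}}$, hence it is fixed by $\Pi^{(r)}_{\bkt{n\mathbf I}}$. Consequently, by the invariance relation \eqref{eq:invar} (applied in the same way as in the proof of Lemma~\ref{lem:Tr_bnd_Holder}),
\begin{equation*}
({\rm Id} - \Pi^{(r)}_{\bkt{n\mathbf I}}) \, \theta^\ast_{\bkt{n\mathbf I}} = ({\rm Id} - \Pi^{(r)}_{\bkt{n\mathbf I}})\bigl(\theta^\ast_{\bkt{n\mathbf I}} - {\rm Tayl}_{\bkt{n\mathbf I}}^{(r_0)}(f)\bigr).
\end{equation*}
Taking $\ell_\infty$-norms and using submultiplicativity, one gets
\begin{equation*}
\bigl\|({\rm Id} - \Pi^{(r)}_{\bkt{n\mathbf I}})\theta^\ast_{\bkt{n\mathbf I}}\bigr\|_\infty \le \bigl(1 + \|\Pi^{(r)}_{\bkt{n\mathbf I}}\|_{\ell_\infty \to \ell_\infty}\bigr) \, \bigl\|\theta^\ast_{\bkt{n\mathbf I}} - {\rm Tayl}_{\bkt{n\mathbf I}}^{(r_0)}(f)\bigr\|_\infty.
\end{equation*}
The second factor is already controlled by $\frac{|f|_{\mathbf I; r_0, \alpha}}{r_0!}|\mathbf I|^{r_0+\alpha}$ via the Taylor estimate \eqref{eq:taylor_approx}, so everything reduces to bounding the first factor by $C(r)$.

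The main step is therefore to show $\|\Pi^{(r)}_I\|_{\ell_\infty \to \ell_\infty} \le C(r)$ uniformly over all sub-intervals $I \subset [n]$. Since this operator norm equals the maximum absolute row sum of the projection matrix, and, by symmetry of $\Pi^{(r)}_I$, it suffices to bound $\sum_{j \in I} |(\Pi^{(r)}_I)_{i,j}|$ uniformly in $i \in I$. The natural approach is to write $\Pi^{(r)}_I = \sum_{k=0}^{r} \phi_k \phi_k^\top$ for an $\ell_2(I)$-orthonormal basis $\{\phi_k\}_{k=0}^r$ of $S^{(r)}_I$ built from discrete Legendre-type polynomials, so that entries factor as $(\Pi^{(r)}_I)_{i,j} = \sum_k \phi_k(i)\phi_k(j)$. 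A classical pointwise estimate for such orthonormal discrete polynomials on an interval of length $|I|$ yields $|\phi_k(i)| \le C(r)/\sqrt{|I|}$, and hence $|(\Pi^{(r)}_I)_{i,j}| \le C(r)/|I|$, giving a row sum of at most $C(r)$. This kind of pointwise bound is exactly the ingredient Lemma~\ref{lem:poly1} (invoked earlier to bound the diagonal $(\Pi^{(r)}_I)_{i_0,i_0} \le C(r)/|I|$) is designed to provide; I would apply its off-diagonal version here.

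The only delicate point is the dependence on $|I|$: if $|I| \le r$, the subspace $S^{(r)}_I$ equals all of $\R^I$, so $\Pi^{(r)}_I = {\rm Id}$ and the $\ell_\infty$ operator norm is trivially $1$; for $|I| > r$, the orthonormal basis construction is non-degenerate and gives the $C(r)$ bound above. So the bound $1 + \|\Pi^{(r)}_I\|_{\ell_\infty \to \ell_\infty} \le C(r)$ holds in all cases. Substituting into the displayed inequality above yields \eqref{eq:T2}. The principal obstacle is the uniform pointwise bound on the discrete orthonormal polynomials, but this is a standard fact that should already be packaged as Lemma~\ref{lem:poly1} (or a close companion) of the paper.
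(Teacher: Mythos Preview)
Your proposal is correct and follows essentially the same route as the paper: subtract the Taylor polynomial (which lies in $S^{(r)}_{\bkt{n\mathbf I}}$), then bound the $\ell_\infty\!\to\!\ell_\infty$ operator norm of $\Pi^{(r)}_{\bkt{n\mathbf I}}$ by a constant $C(r)$ and invoke the Taylor estimate \eqref{eq:taylor_approx}. The only minor difference is in how the operator-norm bound is obtained: the paper packages this as \eqref{eq:projector2} in Lemma~\ref{lem:projector} and derives it via Cauchy--Schwarz on the row sums together with the \emph{diagonal} bound of Lemma~\ref{lem:poly1} (i.e.\ $\sum_j |(\Pi^{(r)}_I)_{i,j}| \le \sqrt{|I|}\sqrt{(\Pi^{(r)}_I)_{i,i}} \le C(r)$), so no separate ``off-diagonal version'' of Lemma~\ref{lem:poly1} is needed.
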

\begin{proof}
Since ${\rm Tayl}_{\bkt{n \mathbf I}}^{(r_0)}(f) \in S_{\bkt{n \mathbf I}}^{(r)} (\supset S_{\bkt{n \mathbf I}}^{(r_0)})$ (see \eqref{def:Taylor}) and $\Pi^{(r)}_{\bkt{n \mathbf I}}$ is the orthogonal projector 
onto $S_{\bkt{n \mathbf I}}^{(r)}$, we have
\begin{equation*}
\big({\rm Id} - \Pi_{\bkt{n \mathbf I}}^{(r)}\big) \, \theta^\ast_{\bkt{n \mathbf I}} = \big({\rm Id} - 
\Pi_{\bkt{n \mathbf I}}^{(r)}\big) \, \big(\theta^\ast_{\bkt{n \mathbf I}} - {\rm Tayl}_{\bkt{n \mathbf I}}^{(r_0)}
(f)\big).
\end{equation*}
Therefore, 
\begin{equation*}
\begin{split}
\big\|\big({\rm Id} - \Pi_{\bkt{n \mathbf I}}^{(r)}\big) \, \theta^\ast_{\bkt{n \mathbf I}}\big\|_{\infty} &=  \big\|\big({\rm Id} - \Pi_{\bkt{n \mathbf I}}^{(r)}\big) \, \big(\theta^\ast_{\bkt{n \mathbf I}} - {\rm 
Tayl}_{\bkt{n \mathbf I}}^{(r_0)}(f)\big)\big\|_\infty\\
& \le \big\|\big({\rm Id} - \Pi_{\bkt{n \mathbf I}}^{(r)}\big) \big\|_{\infty} \, 
\big\| \big(\theta^\ast_{\bkt{n \mathbf I}} - {\rm Tayl}_{\bkt{n \mathbf I}}^{(r_0)}(f)\big) \big\|_{\infty}\\
& \stackrel{\eqref{eq:projector2} + \eqref{eq:taylor_approx}}{\le} %
C(r)\, %
|f|_{\mathbf I; r_0, \alpha} |\mathbf I|^{r_0 + \alpha}. \qedhere
\end{split}
\end{equation*}
\end{proof}

\begin{lemma}\label{lem:T1}
Let $I$ be a sub-interval of $[n]$ and $\lambda \ge 0$. Then for any 
interval $I_1 \subset I$ %
and $\theta \in \R^n$, we have
\begin{equation}\label{eq:T1}
\|(\Pi^{(r)}_{I} \theta_I)_{I_1} -  \Pi^{(r)}_{I_1}\theta_{I_1}\|_{\infty} 
\le C(r) \, \frac {T^{(r)}_{\theta}(I)}{\sqrt{|I_1|}}.
\end{equation}
\end{lemma}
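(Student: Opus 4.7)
The plan is to identify the difference $(\Pi^{(r)}_I \theta_I)_{I_1} - \Pi^{(r)}_{I_1}\theta_{I_1}$ as an element of the polynomial subspace $S^{(r)}_{I_1}$, bound its $\ell_2$-norm by $T^{(r)}_\theta(I)$ using two Pythagorean identities (one on each side of the split $I = I_1 \sqcup I_2$), and then convert this $\ell_2$-bound into an $\ell_\infty$-bound via the standard norm-comparison inequality for discrete polynomial vectors (along the lines of Lemma~\ref{lem:poly1}).

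More concretely, I would set $I_2 = I \setminus I_1$ and $u_j = \Pi^{(r)}_{I_j}\theta_{I_j}$, $\tilde u_j = (\Pi^{(r)}_I \theta_I)_{I_j}$ for $j = 1, 2$. Since the restriction of a degree-$r$ polynomial vector on $I$ to any subset $I_j$ remains a degree-$r$ polynomial vector, $\tilde u_j \in S^{(r)}_{I_j}$, and hence $u_j - \tilde u_j \in S^{(r)}_{I_j}$. The defining orthogonality $\theta_{I_j} - u_j \perp S^{(r)}_{I_j}$ then yields
\[
\|\theta_{I_j} - \tilde u_j\|^2 = \|\theta_{I_j} - u_j\|^2 + \|u_j - \tilde u_j\|^2, \qquad j = 1, 2.
\]
Summing these and using the orthogonal split $\|\theta_I - \Pi^{(r)}_I \theta_I\|^2 = \|\theta_{I_1} - \tilde u_1\|^2 + \|\theta_{I_2} - \tilde u_2\|^2$ in the first expression for $Q^{(r)}$ in \eqref{def:Q} gives the key identity
\[
\|u_1 - \tilde u_1\|^2 + \|u_2 - \tilde u_2\|^2 = Q^{(r)}(\theta; I_1, I) \le (T^{(r)}_\theta(I))^2,
\]
where the final inequality is by the definition \eqref{def:discrep} (noting that $I_1$ is an interval, so the partition $(I_1, I_2)$ is admissible). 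In particular $\|u_1 - \tilde u_1\| \le T^{(r)}_\theta(I)$.

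Finally, since $u_1 - \tilde u_1 \in S^{(r)}_{I_1}$, the equivalence of $\ell_\infty$ and (normalized) $\ell_2$ norms on the finite-dimensional space of degree-$r$ polynomial vectors on the interval $I_1$ gives a bound of the form $\|p\|_\infty \le C(r)\,\|p\|/\sqrt{|I_1|}$ for $p \in S^{(r)}_{I_1}$; applying this to $p = u_1 - \tilde u_1$ yields \eqref{eq:T1}. The only step that requires genuine work is this last polynomial comparison, but it is a scale-invariant statement on an $(r+1)$-dimensional space and should already be available as Lemma~\ref{lem:poly1} (or a routine consequence of it via an orthonormal basis of discrete Legendre polynomials on $I_1$).
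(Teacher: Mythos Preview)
Your argument is correct and essentially the same as the paper's: both bound the $\ell_2$-norm of the difference by $T^{(r)}_\theta(I)$ and then invoke the $\ell_\infty/\ell_2$ comparison for discrete polynomial vectors. The only cosmetic difference is that the paper first subtracts off $\Pi^{(r)}_I\theta_I$ to reduce to the case $\Pi^{(r)}_I\theta_I=0$ (so that your $\tilde u_j$ vanish), whereas you derive the cleaner exact identity $\|u_1-\tilde u_1\|^2+\|u_2-\tilde u_2\|^2=Q^{(r)}(\theta;I_1,I)$ directly; note also that the norm comparison you need is stated in the paper as Lemma~\ref{lem:projector} (display~\eqref{eq:projector1}) rather than Lemma~\ref{lem:poly1}.
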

\begin{proof}
Using the same invariance argument as in the proof of 
Lemma~\ref{lem:Tr_bnd_Holder}, in particular %
the display \eqref{eq:invar}, we can assume without any loss of 
generality that
\begin{equation}\label{eq:PiIvanish1}
\Pi_I^{(r)} \theta_I = 0.
\end{equation}
But in that case we can write
\begin{equation}\label{eq:Ivanish}
\|(\Pi^{(r)}_{I} \theta_I)_{I_1} -  \Pi^{(r)}_{I_1}\theta_{I_1}\|_{\infty} 
= \|\Pi^{(r)}_{I_1}\theta_{I_1}\|_{\infty}.
\end{equation}
Also, %
\begin{equation}\label{eq:PiI_1Tr}
\begin{split}
\|\Pi_{I_1}^{(r)}\theta_{I_1}\|^2 
\stackrel{\eqref{def:Q}+\eqref{eq:PiIvanish1}}{\le} Q^{(r)}(\theta; I_1, 
I) \stackrel{\eqref{def:discrep}}{\le} (T^{(r)}_{\theta}(I))^2.
\end{split}
\end{equation}
Since $\Pi_{I_1}^{(r)}\theta_{I_1} \in S_{I_1}^{(r)}$, it follows from 
Lemma~\ref{lem:projector} in the next subsection that
\begin{equation*}
\|\Pi_{I_1}^{(r)}\theta_{I_1}\|_{\infty} \le C(r) 
\, \frac{\|\Pi_{I_1}^{(r)}\theta_{I_1}\|}{\sqrt{|I_1|}}.
\end{equation*}
Combined with \eqref{eq:Ivanish} and \eqref{eq:PiI_1Tr}, this yields 
\eqref{eq:T1}.
\end{proof}

We are now ready to prove Proposition~\ref{prop:bias}.

\begin{proof}[Proof of Proposition~\ref{prop:bias}]
Recalling the two parts of the bias parts ${\rm B}_{i_0,1}$ and ${\rm B}_{i_0,2}$ 
from \eqref{eq:bias_two_parts}, we now bound them separately. Firstly, we can write
\begin{align*}\label{eq:biasbnd1}
   |{\rm B}_{i_0,1}| & \,\, = |\big(\proj^{(r)}_{\bkt{i_0 \pm \hat 
h_{i_0}}} \theta^\ast_{\bkt{i_0 \pm \hat{h}_{i_0}}}\big)_{i_0} - 
\big(\proj^{(r)}_{I_1} \theta^\ast_{I_1}\big)_{i_0}| \\
& \,\, \le \big\|\proj^{(r)}_{[i_0 \pm \hat 
h_{i_0}]} \theta^\ast_{[i_0 \pm \hat{h}_{i_0}]} - 
\proj^{(r)}_{I_1} \theta^\ast_{I_1}\big\|_{\infty} \\
&\stackrel{\eqref{eq:T1}}{\le} C(r) \, \frac{T^{(r)}_{\theta^\ast}
(\bkt{i_0 \pm \hat{h}_{i_0}})}{\sqrt{|I_1|}} \\ 
&\,\,\stackrel{\eqref{eq:noise_threshold_decomp_ub}}{\le} C(r) \sigma \frac{\sqrt{\log n}}{\sqrt{|I_1|}}.
\end{align*}
On the other hand,
\begin{align*}%
   |{\rm B}_{i_0,2}| = |\big(\proj^{(r)}_{I_1} \theta^\ast_{I_1}\big)_{i_0} - \theta_{i_0}^\ast| \le \|({\rm Id} - 
\Pi_{I_1}^{(r)})\theta_{I_1}^\ast\|_{\infty} \stackrel{\eqref{eq:T2}}{\le} C(r) |f|_{\mathbf I_0; r, \alpha}\big(\tfrac{|I_1| - 
1}n\big)^{r_0 + \alpha_0},
\end{align*}
where for the second inequality we also need $I_1 \subset \bkt{n\mathbf I_0}$. Now 
setting $I_1 = \bkt{i_0 \pm \tilde s_1}$ where
\begin{equation*}
\tilde s_1  = c(r) \, (\sigma\sqrt{\log n})^{\frac1{r_0 + 
\alpha_0 + \frac12}} \, |f|_{\mathbf I_0; r_0, \alpha_0}^{-\frac1{r_0 + 
\alpha_0 + \frac12}}\, n^{\frac{r_0 + \alpha_0}{r_0 + \alpha_0 + 
\frac12}}  \wedge %
ns_0,
\end{equation*}
we see in view of \eqref{eq:bandwidthlb} that we can make $I_1 \subset 
\bkt{i_0 \pm \hat h_{i_0}} \cap \bkt{n\mathbf I_0}$ by suitably choosing $c(r)$. 
Therefore, we can plug the value of $|I_1|$ corresponding to this choice 
into the bounds on ${\rm B}_{i_0, 1}$ and ${\rm B}_{i_0, 2}$ obtained above and 
add them up to finally obtain~\eqref{eq:biasfinalbd} in view of 
\eqref{eq:bias_two_parts}.
\end{proof}

\begin{proof}[Putting everything together and the proof of Theorem~\ref{thm:main}]
Combining the bound on bias term as in Proposition~\ref{prop:bias} with the one on 
the variance term given by Proposition~\ref{prop:variance}, we deduce 
\eqref{eq:main}.
\end{proof}

\subsection{Some properties of polynomials}

We needed a result bounding the diagonal entries of the projection matrix for the subspace of polynomials. The following result is stated and proved in~\cite{chatterjee2024minmax} (see Proposition $13.1$ therein).

\begin{lemma}\label{lem:poly1}
Fix an integer $r \geq 0$. For any positive integers $1 \leq m \leq n$, for any interval $I \subset [n]$ with $|I| = m$, there exists a constant $C_r > 0$ only depending on $r$ such that 
\begin{equation}\label{eq:poly1}
\max_{i \in I} \, (\Pi_I^{(r)})_{i, i} \leq \frac{C_r}{m}.
\end{equation}
\end{lemma}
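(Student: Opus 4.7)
The plan is to bound the diagonal entries $(\Pi_I^{(r)})_{i,i}$ via the classical leverage-score identity. If $\{q_0, q_1, \ldots, q_r\}$ is any orthonormal basis (with respect to the standard Euclidean inner product on $\R^I$) of the subspace $S_I^{(r)}$, then
\[
(\Pi_I^{(r)})_{i,i} = \sum_{k=0}^{r} q_k(i)^2 \qquad \text{for every } i \in I.
\]
Consequently, it suffices to exhibit one such basis for which every member satisfies the uniform pointwise bound $\sup_{i \in I}|q_k(i)| \le C_r/\sqrt{m}$; summing the $r+1$ squared inequalities then yields $(\Pi_I^{(r)})_{i,i} \le (r+1) C_r^2 / m$, which is the desired conclusion after renaming constants.

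Since $S_I^{(r)}$ is invariant under affine reparametrizations of the index variable, I would rescale so that the $m$ elements of $I$ correspond to equispaced nodes $t_i = -1 + 2(i-1)/(m-1) \in [-1,1]$, and take $\{q_k\}_{k=0}^{r}$ to be the discrete orthonormal polynomials produced by Gram--Schmidt applied to the monomials $1, t, t^2, \ldots, t^r$ under the inner product $\langle f, g \rangle_I = \sum_{i=1}^{m} f(t_i) g(t_i)$. The task then reduces to a uniform estimate on these discrete orthonormal polynomials at the nodes $t_i$.

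To obtain the estimate $\sup_i |q_k(i)| \le C_r / \sqrt{m}$, I would compare the $q_k$ with the classical Legendre polynomials $\tilde L_k$ on $[-1, 1]$, orthonormalized with respect to Lebesgue measure, which satisfy the well-known uniform bound $\|\tilde L_k\|_\infty \le \tilde C_k$. For $m$ larger than some threshold $M_r$ depending only on $r$, a Riemann-sum comparison shows that $\langle \cdot, \cdot \rangle_I$ differs from $\tfrac{m}{2} \int_{-1}^{1} \cdot \, dt$ by only an $O_r(1/m)$ relative error on polynomials of degree at most $2r$. Propagating this perturbation of the Gram matrix through Gram--Schmidt yields $q_k(t) = \sqrt{2/m}\,\tilde L_k(t) + O_r(m^{-3/2})$ uniformly in $t \in [-1,1]$, whence the required bound. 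For the finitely many residual cases $m \le M_r$, the trivial estimate $(\Pi_I^{(r)})_{i,i} \le 1 \le M_r/m$ handles the situation by absorbing $M_r$ into the constant.

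The main obstacle I anticipate is the uniform Riemann-sum comparison and the tracking of the perturbation through Gram--Schmidt, which is somewhat delicate because one needs uniform bounds of the correct order $m^{-1/2}$ for \emph{each} orthonormalized polynomial rather than just in an $\ell^2$ sense. An equally viable alternative is to directly invoke the known $\ell^\infty$-bounds for normalized discrete Chebyshev / Gram / Hahn polynomials available in the orthogonal polynomials literature, which sidesteps the approximation argument at the cost of a black-box citation.
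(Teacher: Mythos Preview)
Your approach is correct and is essentially the natural proof of this fact. Note, however, that the paper does \emph{not} prove this lemma at all: it simply cites Proposition~13.1 of \cite{chatterjee2024minmax}. In fact, the paper explicitly uses the very ingredient you single out as central --- the existence of an orthonormal basis $\{\tilde\eta^{(k)}\}$ of $S_I^{(r)}$ with $\|\tilde\eta^{(k)}\|_\infty \le C(r)/\sqrt{|I|}$ --- in the proof of the subsequent Lemma~\ref{lem:projector} (see \eqref{eq:l2linfty}), again by citation to the same source. So your leverage-score identity combined with the pointwise ONB bound is precisely the mechanism the cited reference employs, and your proposal effectively reconstructs that argument rather than offering an alternative route.

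One small remark: your basis $\{q_0,\ldots,q_r\}$ has $r+1$ elements only when $m \ge r+1$; for $m \le r$ the space $S_I^{(r)}$ is all of $\R^I$ and $\Pi_I^{(r)}$ is the identity. Your trivial bound for $m \le M_r$ already absorbs this case provided $M_r \ge r+1$, so there is no real gap, but it is worth stating explicitly.
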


We also used the following result which is a particular property of a discrete polynomial vector. It says that the average $\ell_2$-norm is comparable to $\ell_{\infty}$-norm for any vector which is a discrete polynomial.

\begin{lemma}\label{lem:projector}
For any (non-empty) sub-interval $I$ of $[n]$ and $\eta \in S_{I}^{(r)}$ 
(recall \eqref{def:SIr}), we have
\begin{equation}\label{eq:projector1}
\|\eta\|_{\infty} \le C(r) \, \frac{\|\eta\|}{\sqrt{|I|}}.
\end{equation}
Furthermore, letting $\|\Pi_{I}^{(r)}\|_{\infty} \stackrel{{\rm def.}}
{=} \sup_{\theta \ne 0} \frac{\|\Pi_{I}^{(r)}\theta\|_{\infty}}{\|\theta\|_{\infty}}$ denote the operator norm of 
$\Pi_{I}^{(r)}$ w.r.t. to the $\ell_\infty$-norm on $\R^I$, we have
\begin{equation}\label{eq:projector2}
\|\Pi_{I}^{(r)}\|_{\infty} \le C(r).
\end{equation}
\end{lemma}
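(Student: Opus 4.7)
The plan is to reduce both inequalities to the diagonal bound on projection matrices furnished by Lemma~\ref{lem:poly1}, via short manipulations exploiting the symmetry and idempotence of $\Pi_I^{(r)}$.

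For \eqref{eq:projector1}, I would begin by noting that any $\eta \in S_I^{(r)}$ is fixed by the projection, i.e., $\eta = \Pi_I^{(r)}\eta$. Reading this coordinatewise, $\eta_i$ equals the inner product of the $i$-th row of $\Pi_I^{(r)}$ with $\eta$, so Cauchy--Schwarz yields $|\eta_i| \le \|\eta\|\cdot\bigl(\sum_{j\in I}(\Pi_I^{(r)})_{i,j}^2\bigr)^{1/2}$. The row norm squared collapses to the diagonal entry $(\Pi_I^{(r)})_{i,i}$ by symmetry and idempotence---this is the identical calculation already carried out in the text immediately following \eqref{Piep_bnd}---and Lemma~\ref{lem:poly1} then controls this diagonal entry by $C(r)/|I|$. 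Taking the supremum over $i \in I$ delivers \eqref{eq:projector1}.

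To obtain \eqref{eq:projector2}, I would apply \eqref{eq:projector1} with $\eta$ taken to be $\Pi_I^{(r)}\theta$, which lies in $S_I^{(r)}$. Chaining the resulting bound with the $\ell_2$-contractivity of orthogonal projection, $\|\Pi_I^{(r)}\theta\| \le \|\theta\|$, followed by the elementary inequality $\|\theta\| \le \sqrt{|I|}\,\|\theta\|_\infty$ valid for any vector indexed by $I$, the two factors of $\sqrt{|I|}$ cancel exactly, leaving $\|\Pi_I^{(r)}\theta\|_\infty \le C(r)\|\theta\|_\infty$, which is precisely \eqref{eq:projector2}.

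There is no substantive obstacle here once Lemma~\ref{lem:poly1} is in hand; the argument is essentially a two-step bookkeeping exercise. The conceptual content packaged in Lemma~\ref{lem:poly1} is that the $(r+1)$-dimensional subspace $S_I^{(r)}$ is uniformly well-conditioned in the sense that the sup norm and the normalized $\ell_2$ norm $\|\cdot\|/\sqrt{|I|}$ are equivalent up to constants depending only on $r$, and this is the only scaling information that \eqref{eq:projector1}--\eqref{eq:projector2} require.
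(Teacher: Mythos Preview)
Your proof is correct, but the route differs from the paper's in a way worth noting. For \eqref{eq:projector1}, the paper constructs an orthonormal basis $\{\tilde\eta^{(k)}\}$ of $S_I^{(r)}$ with $\|\tilde\eta^{(k)}\|_\infty \le C(r)/\sqrt{|I|}$ (invoking Lemmas~13.1--13.2 of \cite{chatterjee2024minmax}), then expands $\eta$ in this basis and compares the $\ell_1$ and $\ell_2$ norms of the coefficient vector via Cauchy--Schwarz. Your argument bypasses the ONB entirely: using $\eta = \Pi_I^{(r)}\eta$ and the row-norm identity $\sum_j(\Pi_I^{(r)})_{i,j}^2 = (\Pi_I^{(r)})_{i,i}$, you reduce directly to the diagonal bound of Lemma~\ref{lem:poly1}. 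This is shorter and uses only the single black box Lemma~\ref{lem:poly1} rather than the additional ONB construction. For \eqref{eq:projector2}, the paper works from the row-sum formula $\|\Pi_I^{(r)}\|_\infty = \max_i\sum_j|(\Pi_I^{(r)})_{i,j}|$ and applies Cauchy--Schwarz plus Lemma~\ref{lem:poly1} directly, whereas you chain \eqref{eq:projector1} through $\ell_2$-contractivity and the trivial $\ell_2$--$\ell_\infty$ comparison; both are equally valid, though the paper's version makes \eqref{eq:projector2} logically independent of \eqref{eq:projector1}. Either way, as you observe, Lemma~\ref{lem:poly1} is doing all the real work.
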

\begin{proof}
Since $I$ is an interval, we can use Lemma~13.1 and 13.2 in 
\cite{chatterjee2024minmax} to deduce the existence of an orthonormal basis (ONB) $\{\tilde \eta^{(k)}\}_{0 \le k 
\le r}$ for $S_I^{(r)}$ such that
\begin{equation}\label{eq:l2linfty}
\max_{0 \le k \le r \wedge (|I|-1)}  \|\tilde \eta^{(k)}\|_{\infty} \le 
\frac{C(r)}{\sqrt{|I|}}.
\end{equation}
Now writing for any $\eta \in S_I^{(r)}$,
\begin{equation*}
\eta = \sum_{0 \le k \le r \wedge (|I| - 1)} a_k\, \tilde \eta^{(k)}
\end{equation*}
with $a_k \in \R$, whence
\begin{equation*}
\|\eta\|_2 = \big(\sum_{0 \le k \le r \wedge (|I| - 1)} 
a_k^2\big)^{\frac12} \text{ and } \|\eta\|_{\infty} \le \max_{0 \le k \le 
r \wedge (|I|-1)} \|\tilde \eta^{(k)}\|_{\infty} \: \big(\sum_{0 
\le k \le r \wedge (|I|-1)} |a_k|\big).
\end{equation*}
By the Cauchy-Schwarz inequality, 
\begin{equation*}
\sum_{0 
\le k \le r \wedge (|I|-1)} |a_k| \le \sqrt{r}\, \big(\sum_{0 \le k \le r \wedge (|I| 
- 1)} a_k^2\big)^{\frac12} = \|\eta\|_2.
\end{equation*}
Combined with \eqref{eq:l2linfty}, the last two displays yield 
\eqref{eq:projector1}.

Next we prove \eqref{eq:projector2}. Identifying $\Pi_I^{(r)}$ with the corresponding matrix, we have the following standard expression for $\|\Pi_I^{(r)}\|_{\infty}$.
\begin{equation*}
\|\Pi_I^{(r)}\|_{\infty} = \max_{i \in I} \, \sum_{j \in I} \, |
(\Pi_I^{(r)})_{i, j}|.
\end{equation*}
Since $\Pi_I^{(r)}$ is an orthogonal projection, it is idempotent and 
symmetric (as a matrix) and hence
\begin{equation*}
\sum_{j \in I} \, ((\Pi_I^{(r)})_{i, j})^2 = (\Pi_I^{(r)})_{i, i}
\end{equation*}
for each $i \in I$. Consequently by the Cauchy-Schwarz inequality,
\begin{equation}\label{eq:inftynormbnd}
\|\Pi_I^{(r)}\|_{\infty} = \max_{i \in I} \, \sum_{j \in I} \, |(\Pi_I^{(r)})_{i, 
j}| \le \sqrt{|I|} 
\, \sqrt{\max_{i \in I} \, \sum_{j \in I} \, ((\Pi_I^{(r)})_{i, j})^2} = 
\sqrt{|I|} \sqrt{\max_{i \in I} \, (\Pi_I^{(r)})_{i, i}} \, .
\end{equation}
Plugging \eqref{eq:poly1} into the right-hand side of above, we obtain 
\eqref{eq:projector2}.
\end{proof}

\section{Algorithm and simulations}\label{sec:algsimu}
In this section we first discuss the computational aspects of \laser{}. After that, we provide a comparative study of our method vis-\`{a}-vis other popular methods in the literature backed by simulation studies.
\subsection{Pseudocode for \laser{}}\label{sec:pseudo}
We now %
present %
a pseudocode for \laser{}. For ease of understanding, we have broken down the full algorithm into three subroutines. Algorithm~\ref{alg:compute_discrepancy}, named \texttt{ComputeDiscrepancy}, outputs for an interval $I$, the discrepancy criterion \eqref{def:Q}. 
Algorithm \ref{alg:bandwidth_selector}, called \texttt{BandwidthSelector}, \sloppy uses \texttt{ComputeDiscrepancy} to compute local bandwidths \'{a} la \eqref{def:bandwidth} at co-ordinates of interest given the tuning parameter $\lambda$. Algorithm~\ref{alg:laser} calls \texttt{BandwidthSelector} at each co-ordinate to obtain a local bandwidth and performs a local polynomial regression to output the final estimate at that co-ordinate. We note that the degree of polynomial regression $r$ is an input parameter, which can be specified by the user.

\begin{algorithm}[H]
\caption{\textsf{ComputeDiscrepancy}: Compute the discrepancy criterion}\label{alg:compute_discrepancy}
\begin{algorithmic}[1]
\Require Interval $I$; vector $\theta$; degree parameter $r$, tuning parameter $\lambda$.
\Ensure Discrepancy criterion over $I$.
\For {all sub-intervals $I_1 \subset I$}
    \State Compute $Q^{(r)}(y, I_1, I)$.
\EndFor
\State  Return $\max_{I_1} Q^{(r)}(y, I_1, I)$.
\end{algorithmic}
\end{algorithm}

\begin{algorithm}[H]
\caption{\textsf{BandwidthSelector}: Select local bandwidth at a specified co-ordinate}\label{alg:bandwidth_selector}
\begin{algorithmic}[1]
\Require Requested co-ordinate $i_0$; data $y$; degree parameter $r$, tuning parameter $\lambda$.
\Ensure $\hat{h}_{i_0}$, local bandwidth.
\State Criterion $\gets 0$.
\State $h \gets -1$.
\While {Criterion $\le \lambda$}
    \State $h \gets h + 1$
    \State Criterion $\gets \textsf{ComputeDiscrepancy}(\bkt{i_0 \pm h}, y, X, \lambda, r)$.
\EndWhile
\State Return $h - 1$.
\end{algorithmic}
\end{algorithm}

\begin{algorithm}[H]
\caption{\laser{}: Locally Adaptive Smoothing Estimator for Regression}\label{alg:laser}
\begin{algorithmic}[1]
\Require Data $y$; degree parameter $r$, tuning parameter $\lambda$.
\Ensure $\hat{\theta}$.
\For {$i_0 = 1, \ldots, n$}
    \State $\hat{h}_{i_0} \gets \textsf{BandwidthSelector}(i_0; y, X, r, \lambda)$.
    \State Obtain $\hat{\theta}_{i_0}$ by fitting a degree-$r$ polynomial to $y_{\bkt{i_0 \pm \hat{h}_{i_0}}}$ using least squares.
\EndFor
\State Return $\hat{\theta}$.
\end{algorithmic}
\end{algorithm}

\subsubsection{Computational complexity and a fast dyadic variant}

First consider the complexity of \texttt{ComputeDiscrepancy}.
\begin{itemize}
    \item Each computation of $Q^{(r)}(y, I_1, I)$ in \texttt{ComputeDiscrepancy} involves a least squares fit with $O(|I|)$ observations and $r + 1$ variables. Thus each such computation incurs an $O(|I|)$ cost, where the hidden constant is dependent on $r$. Here and in the sequel, we hide such dependence on $r$ under the $O$-notation.
    \item There are $O(|I|^2)$ choices for the sub-intervals $I_1 \subset I$.
\end{itemize}
Thus the full complexity of \texttt{ComputeDiscrepancy} is $O(|I|^3)$. If one searches over intervals with endpoints and lengths on a dyadic scale, then the complexity of the second step reduces to $O((\log |I|)^2)$ and hence that of \texttt{ComputeDiscrepancy} to $O(|I|(\log |I|)^2)$.

It follows that the worst case complexity of a single run of \texttt{BandwidthSelector} is $\sum_{h \le n} O(h^3) = O(n^4)$. If, on the other hand, we search over $h$ on a dyadic scale as well and also use the dyadic version of \texttt{ComputeDiscrepancy}, then the complexity of \texttt{BandwidthSelector} becomes $O(n (\log n )^3)$.

It follows that the complexity of the full-blown variant of \laser{} is $O(n^5)$ whereas that of its dyadic variant is $O(n^2 (\log n )^3)$. In our reference implementation, we use these dyadic variants of \texttt{ComputeDiscrepancy} and \texttt{BandwidthSelector}. Our numerical experiments show that this dyadic variant has comparable performance to the full-blown version. In fact, although our theoretical risk bounds in Theorem~\ref{thm:main} are stated and proved for the full version of  our method, our proof in Section~\ref{sec:proof}, subject to minor modifications, yields similar bounds for this dyadic variant. Moreover, \laser{} naturally lends itself to a hierarchy of implementations with progressively coarser search spaces but \emph{lesser} computational tax, and an inspection of our proof reveals that the statistical performance of such variants degrade by \emph{at most} logarithmic factors.

\subsection{Numerical experiments}\label{sec:simu}
We compare \laser{} with three popular nonparametric regression methods, namely trend filtering (TF), wavelet thresholding (WT) and cubic smoothing splines (CSS) on the four test functions described in \cite{donoho1994ideal}. We have used the \texttt{genlasso}, \texttt{wavethresh} (with so-called \emph{universal tuning}) and \texttt{npreg} R packages to compute cross-validated versions of TF, WT and CSS, respectively. For \laser{}, we have developed an eponymous R package \texttt{laser}, available at \url{https://gitlab.com/soumendu041/laser}.

\begin{figure}[!th]
\centering
\includegraphics[scale = 0.45]{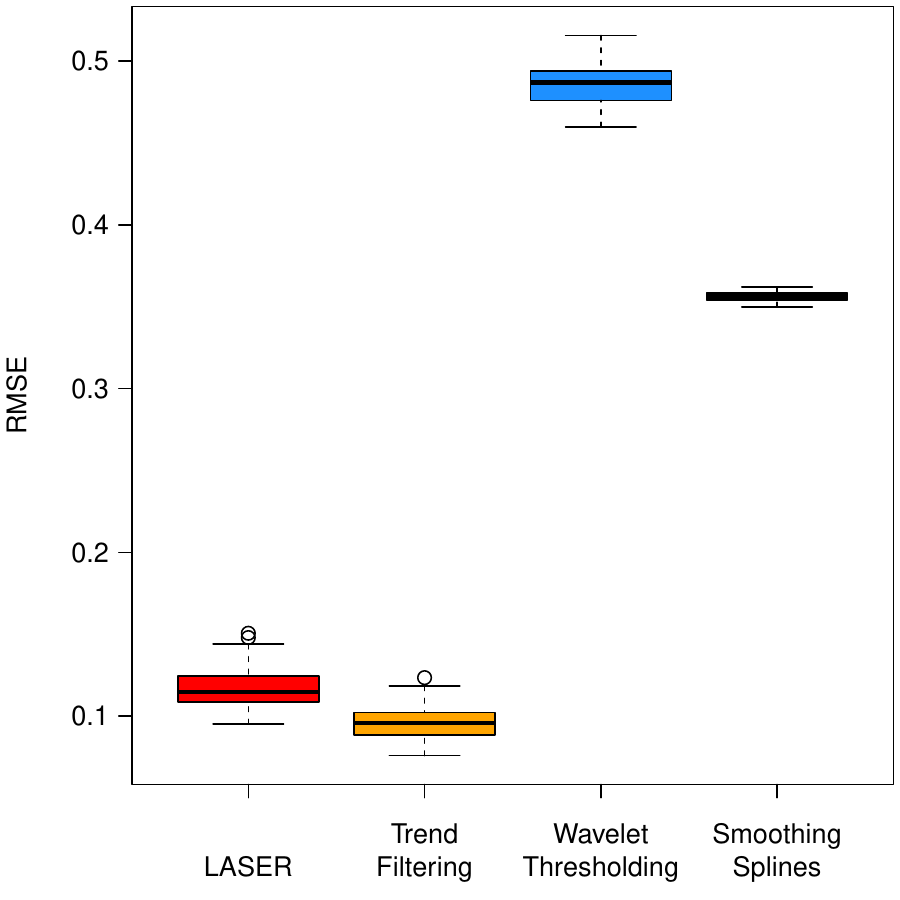}
\includegraphics[scale = 0.48]{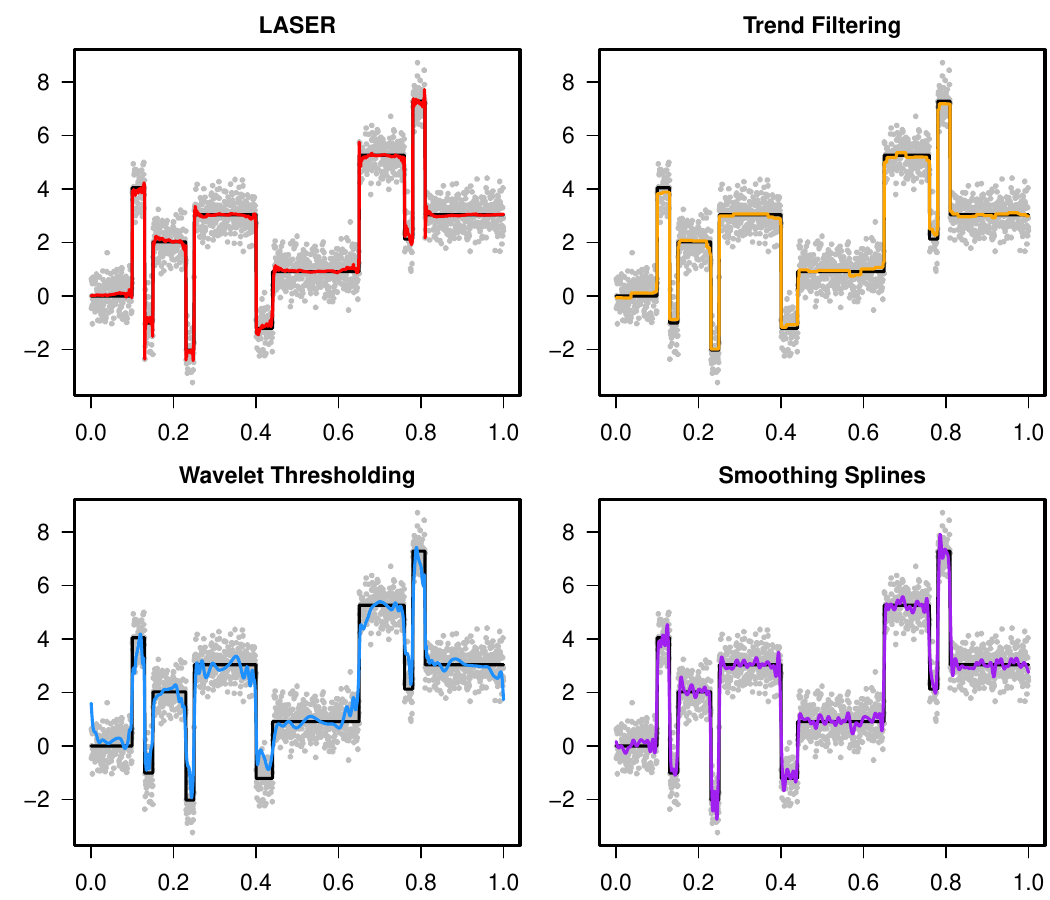}
\caption{The \texttt{Blocks} function. We have used \laser{} with $r = 0$ and $0$-th order Trend Filtering.} 
\label{fig:comp_blocks}
\end{figure}

\begin{figure}[!th]
\centering
\includegraphics[scale = 0.45]{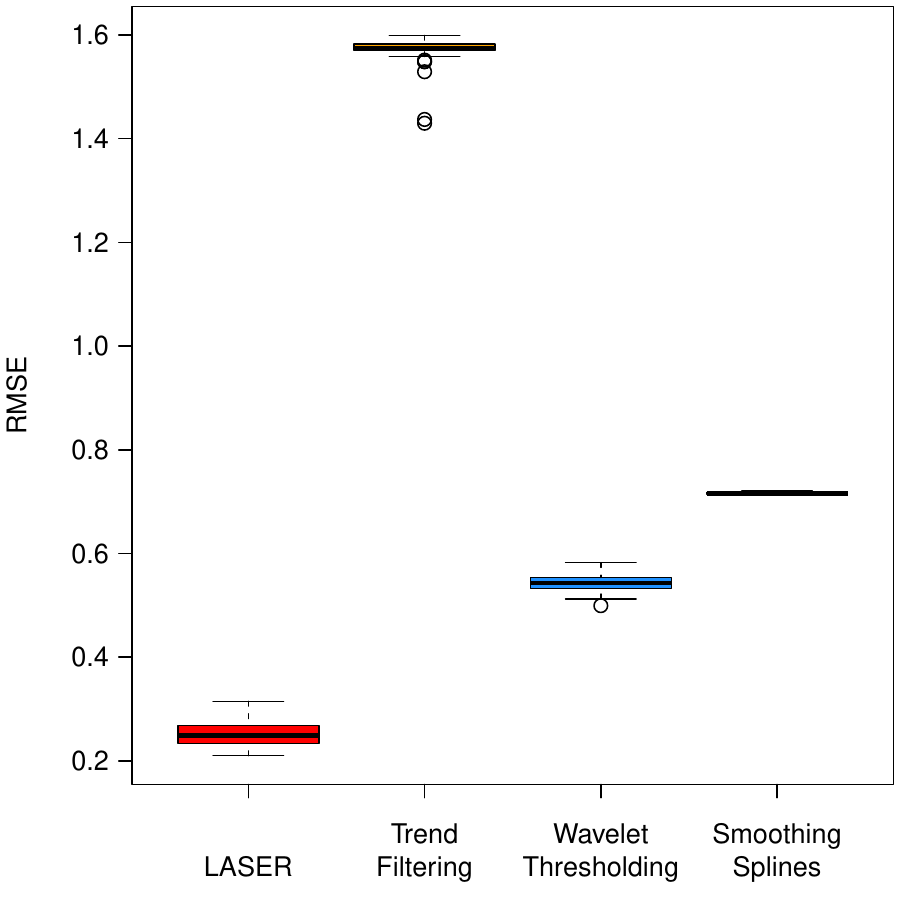}
\includegraphics[scale = 0.48]{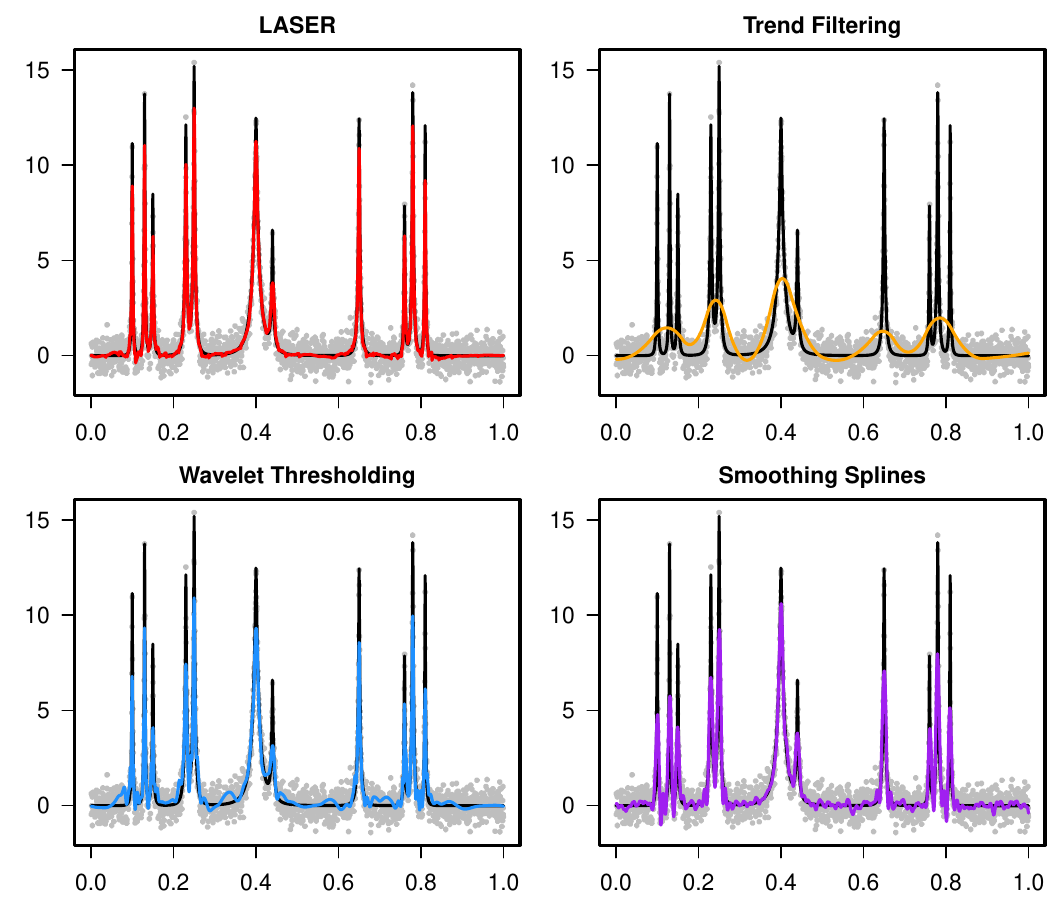}
\caption{The \texttt{Bumps} function. We have used \laser{} with $r = 2$ and $2$-nd order Trend Filtering.} 
\label{fig:comp_bumps}
\end{figure}

\begin{figure}[!th]
\centering
\includegraphics[scale = 0.45]{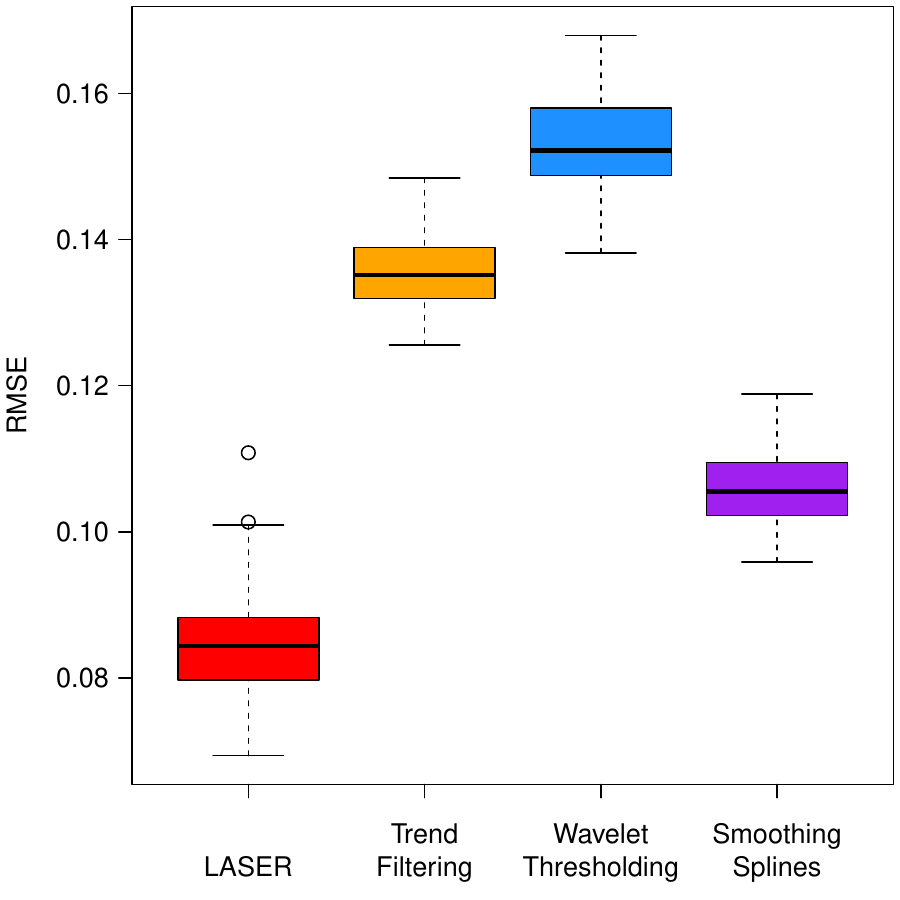}
\includegraphics[scale = 0.48]{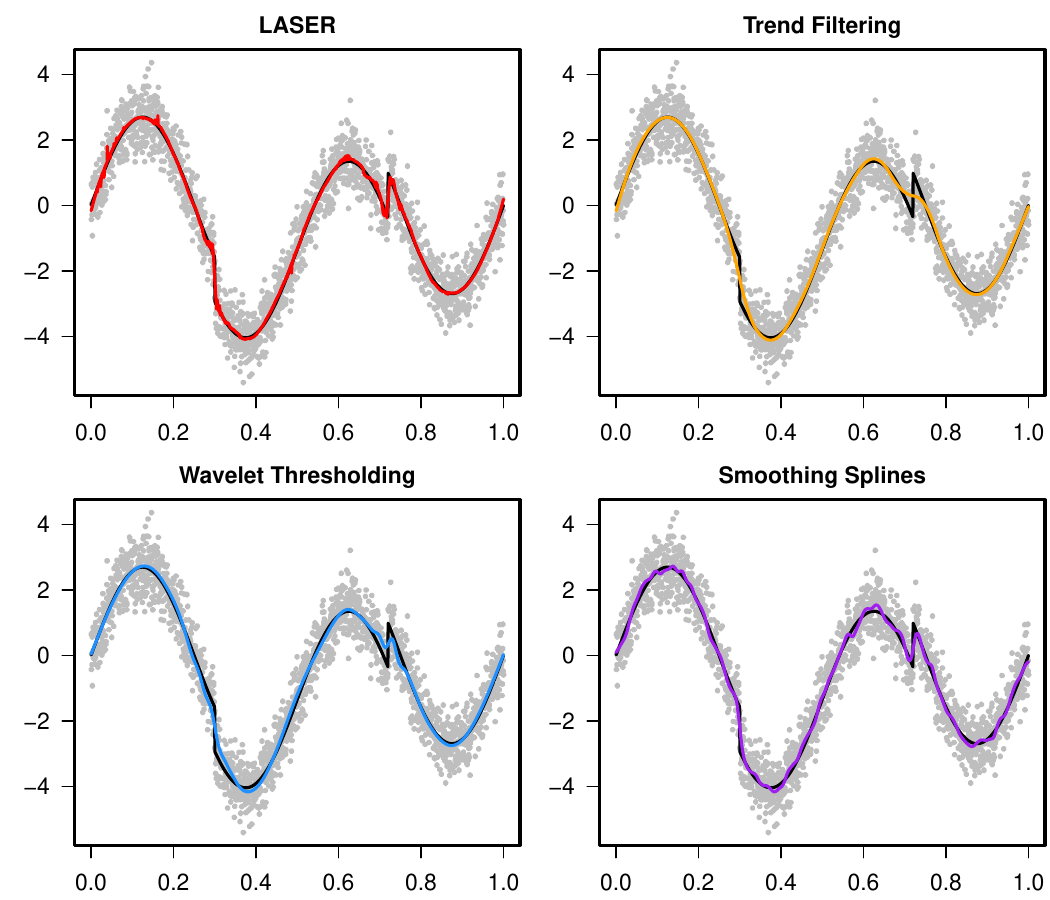}
\caption{The \texttt{HeaviSine} function. We have used \laser{} with $r = 2$ and $2$-nd order Trend Filtering.} 
\label{fig:comp_heavisine}
\end{figure}

\begin{figure}[!th]
\centering
\includegraphics[scale = 0.45]{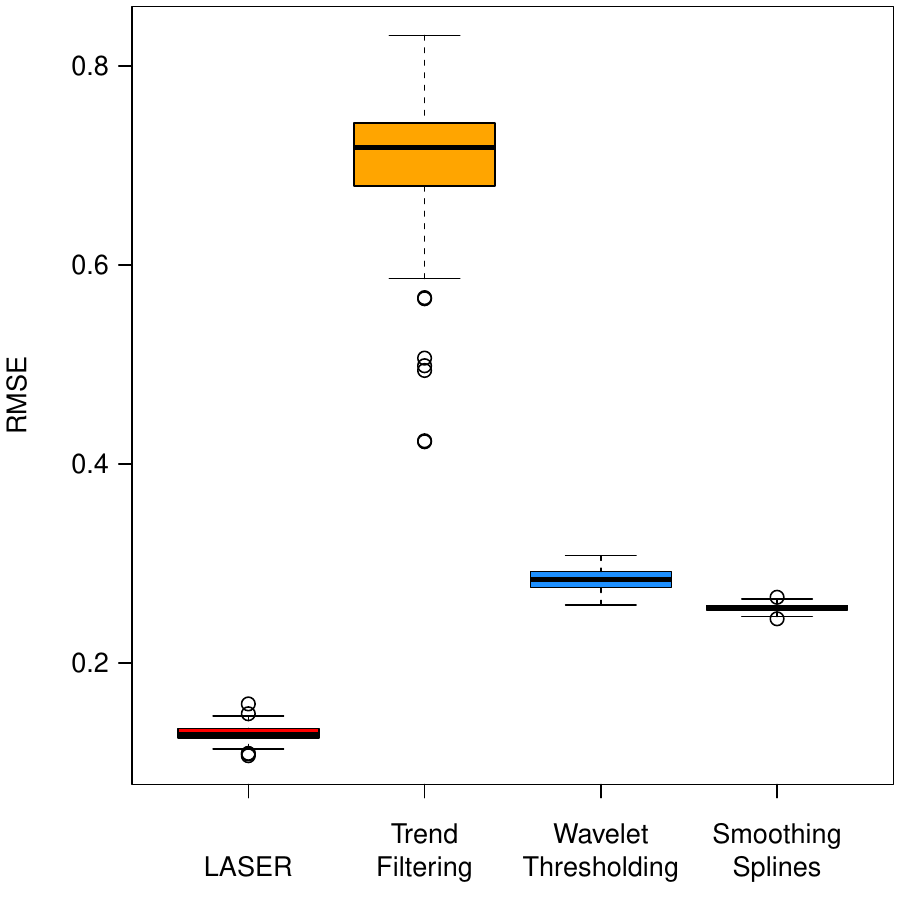}
\includegraphics[scale = 0.48]{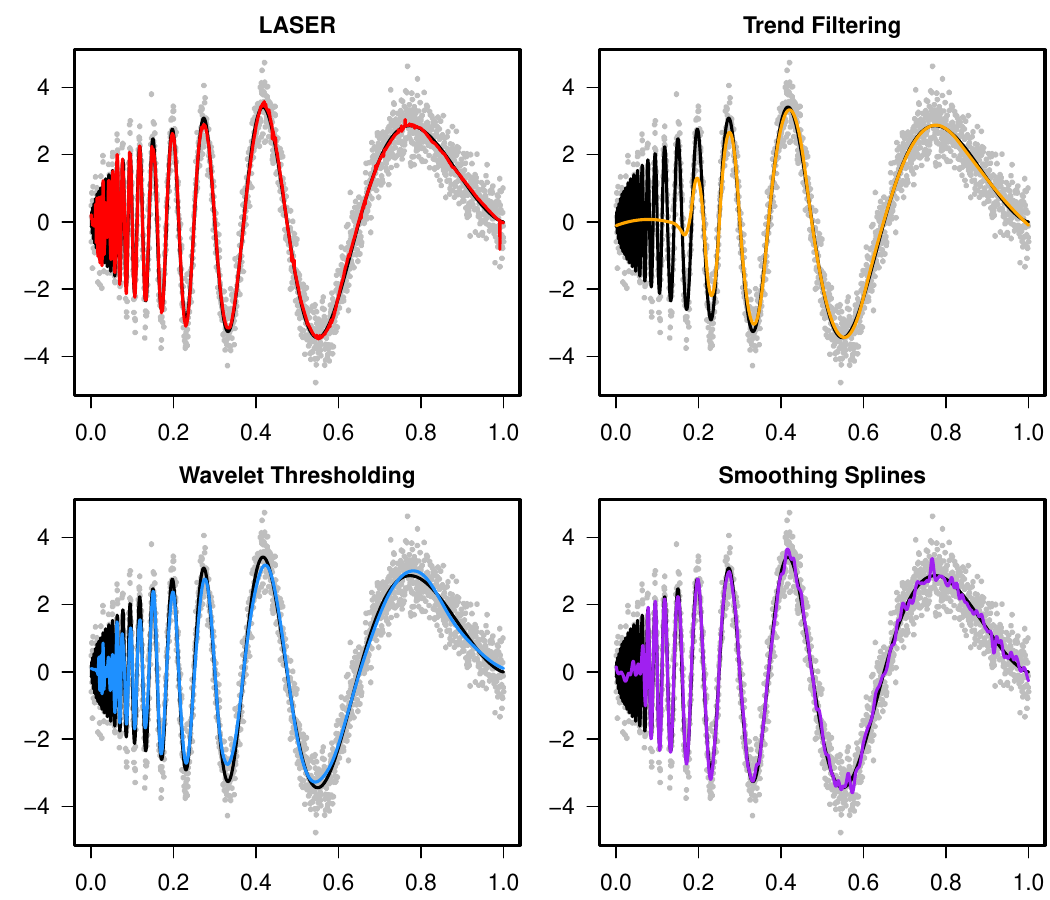}

\caption{The \texttt{Doppler} function. We have used \laser{} with $r = 2$ and $2$-nd order Trend Filtering.} 
\label{fig:comp_doppler}
\end{figure}

Figures~\ref{fig:comp_blocks}, \ref{fig:comp_bumps}, \ref{fig:comp_heavisine} and  \ref{fig:comp_doppler} show the results of four experiments, one for each of the functions \texttt{Blocks}, \texttt{Bumps}, \texttt{HeaviSine}, \texttt{Doppler} from \cite{donoho1994ideal}. The experimental set-up of each of these experiments, is as follows. For $f \in $ \{\texttt{Blocks}, \texttt{Bumps}, \texttt{HeaviSine}, \texttt{Doppler}\}, we set $\vartheta_f = (f(\tfrac{i}{n}))_{1 \le i \le n}$. The observations are generated as
\[
    y = \theta_f + \sigma \epsilon,
\]
where $\sigma > 0$, $\epsilon \sim N_n(0, \mathrm{Id})$ and
\[
    \theta_f := \mathrm{SNR} \cdot \sigma \cdot \frac{\vartheta_f}{\mathrm{sd}(\vartheta_f)}.
\]
Here $\mathrm{sd}(x) := \frac{1}{n}\sum_{i = 1}^n x_i^2 - (\frac{1}{n}\sum_{i = 1}^n x_i)^2$ denotes the numerical standard deviation of a vector $x \in \mathbb{R}^n$.
The factor $\mathrm{SNR}$ captures the signal-to-noise ratio of the problem in the sense that
\[
    \mathrm{SNR} = \frac{\mathrm{sd}(\theta_f)}{\sigma}.
\]

In all our simulations, we have taken $n = 2048$, the errors to be IID $N(0,0.5)$ and $\mathrm{SNR} = 4$. The boxplots are based on $100$ Monte Carlo replications. We have used $5$-fold cross-validation (CV) to tune $\lambda$ for \laser{}. We have also used $5$-fold CV to tune the penalty parameter in TF. In each of Figures~\ref{fig:comp_blocks}, \ref{fig:comp_bumps}, \ref{fig:comp_heavisine} and \ref{fig:comp_doppler}, the left panel shows boxplots comparing the four methods and the right panel shows fits for one of these Monte Carlo realizations. 

In all but one of these experiments, \laser{} substantially outperforms the other three methods. For instance, we see that \laser{} captures more than six cycles (from the right) of the \texttt{Doppler} function accurately in the realization shown in Figure~\ref{fig:comp_doppler}. CSS also seems to do so, but it significantly overfits in the first cycle. TF, on the other hand, overfits much less in the first cycle but captures only about three cycles. Another noteworthy case is that of the \texttt{Bumps} function (see Figure~\ref{fig:comp_bumps}), where TF ($2$-nd order) does not appear to capture the interesting peaks. \laser{} (with degree $2$) does an excellent job in capturing most of these features while overfitting to a much lesser extent compared to both WT and CSS. For the \texttt{HeaviSine} function (see Figure~\ref{fig:comp_heavisine}), both \laser{} and CSS capture the discontinuity near $x = 0.7$, with \laser{} again overfitting to a lesser degree. (The other two methods both fail to capture this.) Finally, for the piecewise constant \texttt{Blocks} function, $0$-th order TF and \laser{} with $r = 0$ both significantly outperform the other two methods (see Figure~\ref{fig:comp_blocks}), with TF showing a slight edge over \laser{} in terms of RMSE.

Our numerical experiments suggest that the proposed method carries a lot of promise and can be a practically useful addition to the current nonparametric regression toolbox. The accompanying R package \texttt{laser} comes with a ready-to-use reference implementation of the dyadic version of \laser{}.

\section{Concluding remarks}\label{sec:Discussions}
In this section we discuss a few aspects and possible extensions of our 
estimator.

\noindent
\textbf{Beyond equispaced design.} Our estimator can be defined at points other than the design points $\{x_i\}_{i = 1}^{n}$ and, moreover, the design points need not be equispaced. For instance, at a point $x$ one can consider symmetric intervals around $x$ and sub-intervals just as before. The estimate will only be a function of the data points $y_i$ for which the corresponding design points $x_i$ fall within these intervals. 

\noindent
\textbf{Extensions to higher dimensions.} The ideas behind our estimator can be 
naturally generalised to higher dimensions. However, a rigorous proof of the 
corresponding risk bounds necessitates some new ideas and will appear in a 
forthcoming article.

\smallskip

\noindent \textbf{Acknowledgement.} SC's research was partly supported by the NSF Grant DMS-1916375. SG's research was partially supported by a grant from the 
Department of Atomic Energy, Government of India, under project 12R\&DTFR5.010500 
\sloppy and in part by a grant from the Infosys Foundation as a member of the 
Infosys-Chandrasekharan virtual center for Random Geometry. SSM was partially 
supported by an INSPIRE research grant (DST/INSPIRE/04/2018/002193) from the Department of Science and Technology, Government of India; a Start-Up Grant and the CPDA from the Indian Statistical Institute; and a Prime Minister Early Career Research Grant (ANRF/ECRG/2024/006704/PMS) from the Anusandhan National Research Foundation, Government of India. SC and SG are grateful to the Indian Statistical Institute, 
Kolkata for its hospitality during the early phases of the project. We 
thank Rajarshi Mukherjee and Adityanand Guntuboyina for many helpful discussions.

\bibliographystyle{plainnat}
\bibliography{refs}

\end{document}